\documentclass{article}

\usepackage{microtype}
\usepackage{graphicx}
\usepackage{subfigure}
\usepackage{booktabs} 

\usepackage{hyperref}

\usepackage{algorithm}
\usepackage{algorithmic}

\usepackage[preprint]{icml2026}

\usepackage{amsmath}
\usepackage{amssymb}
\usepackage{mathtools}
\usepackage{amsthm}
\usepackage{bbm}
\usepackage{makecell}

\usepackage[capitalize,noabbrev]{cleveref}

\usepackage{amsmath,amsfonts,bm}

\def\eqref#1{equation~\ref{#1}}

\def\1{\bm{1}}

\DeclareMathAlphabet{\mathsfit}{\encodingdefault}{\sfdefault}{m}{sl}
\SetMathAlphabet{\mathsfit}{bold}{\encodingdefault}{\sfdefault}{bx}{n}

\usepackage{multirow}
\usepackage{xspace}
\usepackage{colortbl}

\newcommand{\method}{MolField\xspace}

\theoremstyle{plain}
\newtheorem{theorem}{Theorem}[section]
\newtheorem{proposition}[theorem]{Proposition}
\newtheorem{lemma}[theorem]{Lemma}

\theoremstyle{definition}
\newtheorem{definition}[theorem]{Definition}

\theoremstyle{remark}

\usepackage[textsize=tiny]{todonotes}

\begin{document}

\twocolumn[
    \icmltitle{Molecular Representations in Implicit Functional Space via Hyper-Networks}



    \icmlsetsymbol{equal}{*}

    \begin{icmlauthorlist}
        \icmlauthor{Zehong Wang}{nd,equal}
        \icmlauthor{Xiaolong Han}{surrey,equal}
        \icmlauthor{Qi Yang}{in,equal}
        \icmlauthor{Xiangru Tang}{yale}
        \icmlauthor{Fang Wu}{stf}
        \icmlauthor{Xiaoguang Guo}{uconn}
        \\
        \icmlauthor{Weixiang Sun}{nd}
        \icmlauthor{Tianyi Ma}{nd}
        \icmlauthor{Pietro Li\`o}{cam}
        \icmlauthor{Le Cong}{stf}
        \icmlauthor{Sheng Wang}{uw}
        \icmlauthor{Chuxu Zhang}{uconn}
        \icmlauthor{Yanfang Ye}{nd}
    \end{icmlauthorlist}

    \icmlaffiliation{nd}{University of Notre Dame}
    \icmlaffiliation{uconn}{University of Connecticut}
    \icmlaffiliation{in}{Independent Researcher Visiting Zhang's Lab in UConn}
    \icmlaffiliation{surrey}{University of Surrey}
    \icmlaffiliation{stf}{Stanford University}
    \icmlaffiliation{cam}{University of Cambridge}
    \icmlaffiliation{yale}{Yale University}
    \icmlaffiliation{uw}{University of Washington}

    \icmlcorrespondingauthor{Zehong Wang}{zwang43@nd.edu}
    \icmlcorrespondingauthor{Yanfang Ye}{yye7@nd.edu}

    \icmlkeywords{Machine Learning, ICML}

    \vskip 0.3in
]



\printAffiliationsAndNotice{}  

\begin{abstract}

    Molecular representations fundamentally shape how machine learning systems reason about molecular structure and physical properties.
    Most existing approaches adopt a discrete pipeline: molecules are encoded as sequences, graphs, or point clouds, mapped to fixed-dimensional embeddings, and then used for task-specific prediction.
    This paradigm treats molecules as discrete objects, despite their intrinsically continuous and field-like physical nature.
    We argue that molecular learning can instead be formulated as learning in function space.
    Specifically, we model each molecule as a continuous function over three-dimensional (3D) space and treat this molecular field as the primary object of representation.
    From this perspective, conventional molecular representations arise as particular sampling schemes of an underlying continuous object.
    We instantiate this formulation with \method, a hyper-network-based framework that learns distributions over molecular fields.
    To ensure physical consistency, these functions are defined over canonicalized coordinates, yielding invariance to global SE(3) transformations.
    To enable learning directly over functions, we introduce a structured weight tokenization and train a sequence-based hyper-network to model a shared prior over molecular fields.
    We evaluate \method on molecular dynamics and property prediction.
    Our results show that treating molecules as continuous functions fundamentally changes how molecular representations generalize across tasks and yields downstream behavior that is stable to how molecules are discretized or queried.
\end{abstract}

\begin{center}
    \textit{``The limits of language define the limits of our world.''}
\end{center}
\begin{flushright}
    \vspace{-0.3cm}
    \textit{--- Ludwig Wittgenstein}
\end{flushright}

\section{Introduction}

Representations determine how scientific objects are formalized, compared, and manipulated, and thereby shape what can be systematically learned about them \citep{frigg2016scientific,piantadosi2021computational,wallace2024learning}.
A physical system may be described through symbolic rules, discrete structures, or continuous mathematical objects, with each choice highlighting different aspects of the same underlying reality \citep{franklin2017discrete}.
In this sense, representations play a role analogous to language: they define not only how knowledge is expressed, but also what kinds of reasoning and generalization are possible \citep{frigg2016scientific,stanley2015greatness}.
Consequently, representation design does not merely affect modeling convenience, but fundamentally constrains how scientific phenomena can be learned and transferred across settings \citep{morgan1999models}.

\begin{figure}[!t]
    \centering
    \includegraphics[width=\linewidth]{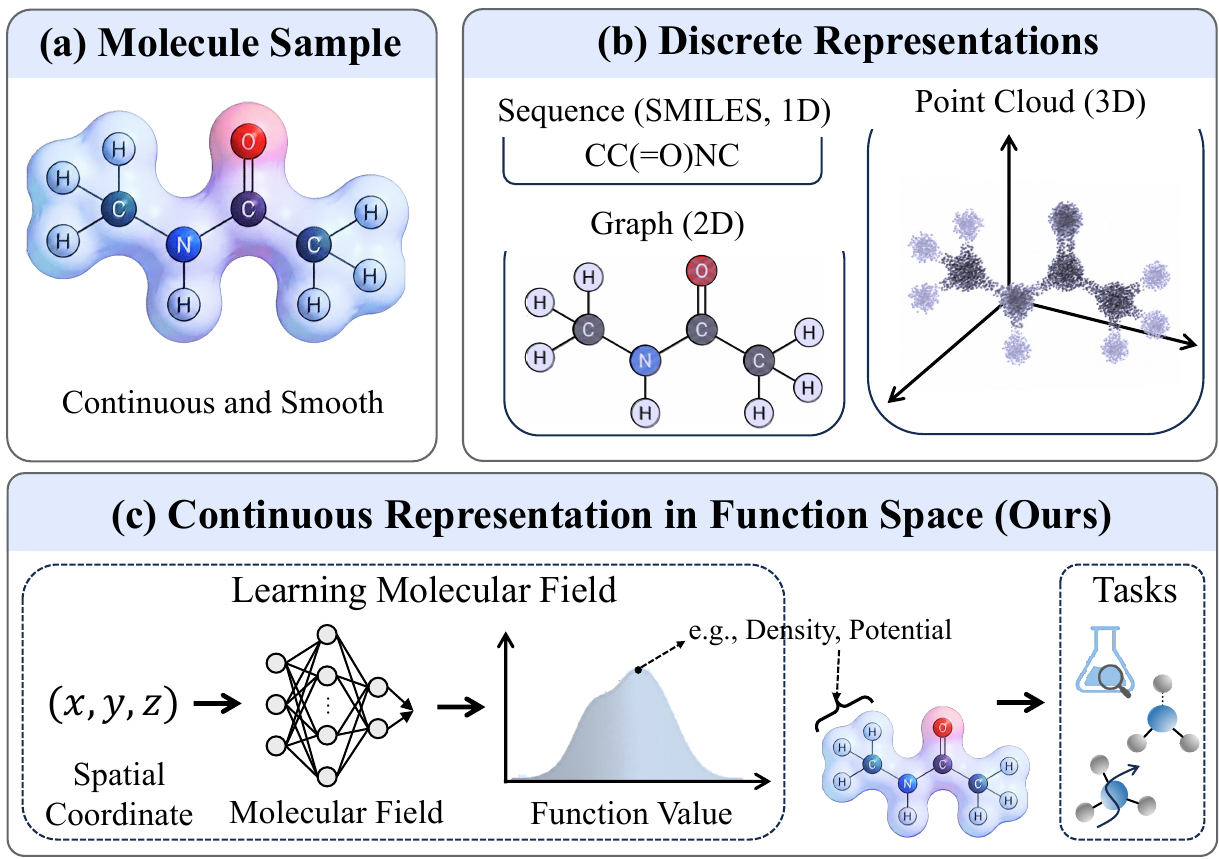}
    \caption{
        \textbf{Molecular representations: discrete versus continuous function-space modeling.}
        (a) A molecule exhibits inherently continuous and smooth spatial structure.
        (b) Conventional molecular representations discretize molecules into sequences, graphs, or point clouds, which only partially capture this continuity.
        (c) In contrast, we model molecules directly in function space by learning a continuous molecular field that maps 3D coordinates to physical quantities (e.g., density or potential), enabling a unified representation for diverse downstream tasks.
    }
    \label{fig:intro}
\end{figure}

Molecular systems present a particularly challenging instance of this representational problem.
A molecule is a physical entity embedded in three-dimensional (3D) space, whose identity is determined not only by discrete atom and bond types, but also by geometry, shape, and their evolution under physical processes \citep{ashcroft2006molecule,crippen1988distance}.
Many fundamental molecular quantities—such as fields, surfaces, and energies—are therefore most naturally described as continuous functions over space \citep{vanommeslaeghe2014molecular}.
Despite this intrinsic continuity, prevailing molecular learning pipelines treat molecules as discrete structures: sequences, graphs, or point clouds are first constructed, mapped to fixed-dimensional embeddings, and then consumed by task-specific predictors (Figure~\ref{fig:intro}).
These discretizations act as particular sampling schemes of the underlying molecular object, but inevitably discard fine-grained continuity and impose representation-specific inductive biases.
As a result, learned representations become tightly coupled to individual tasks and discretization choices, limiting their ability to generalize across tasks \citep{schutt2017schnet,gasteiger2020directional}.

To this end, we formulate molecular learning directly in function space.
Specifically, we represent each molecule as a continuous function defined over 3D space \citep{sitzmann2020implicit,essakine2024we}, whose values encode physically meaningful molecular quantities.
Under this formulation, the molecular function itself becomes the primary object of representation, while downstream tasks correspond to different query operators applied to a shared underlying function, rather than to separate task-specific embeddings.

Formulating molecular learning in function space fundamentally changes the object of representation and, consequently, the learning paradigm.
This shift introduces three core challenges.
\textit{(1) Defining molecules as functions.}
Replacing discrete structures with continuous functions requires molecular representations to be well-defined and physically consistent in continuous space.
While molecular identity is invariant to global translations and rotations \citep{batzner20223}, continuous functions are inherently sensitive to coordinate choices \citep{ran2023neurar}.
Without explicitly accounting for rigid-body symmetries, physically identical molecular configurations may correspond to different functions, breaking representational consistency.
\textit{(2) Learning molecular functions.}
Standard molecular learning pipelines are designed to map discrete structures to fixed-dimensional embeddings.
By contrast, molecular functions are infinite-dimensional objects without an intrinsic discrete interface for comparison, interpolation, or generalization.
Naively discretizing them reintroduces resolution dependence and distorts functional semantics.
A central challenge is therefore to design a structured, learnable interface that exposes continuous molecular functions to modern learning architectures while preserving their compositional structure.
\textit{(3) Using molecular functions in downstream tasks.}
If the function itself is the representation, downstream learning can no longer rely on task-specific embeddings.
Instead, different tasks must be solved by querying a shared underlying function.
This requires a unified paradigm in which molecular functions serve as task-agnostic objects of learning.

To address these challenges, we propose \method, a unified framework that operationalizes molecular learning directly in function space.
\method represents each molecule as a continuous molecular field using a \emph{Canonical Implicit Neural Representation (C-INR)}, which defines molecular functions over canonicalized coordinates invariant to SE(3) transformations, ensuring that the function depends only on intrinsic molecular geometry (Challenge~1).
To expose molecular functions to modern sequence models, we introduce \emph{Structured Weight Tokenization (SWT)}, which converts C-INR parameters into semantically organized tokens while preserving the compositional structure of the underlying neural function (Challenge~2).
Building on this interface, \method employs a \emph{Function Space Hyper-Network (FSHN)} \citep{ha2017hypernetworks} that learns distributions over molecular functions by directly generating C-INR parameters, enabling function-level generalization and amortized instantiation instead of per-instance optimization (Challenge~3).
Extensive experiments on molecular dynamics and property prediction demonstrate that learning distributions in function space yields generalizable representations.

This paper makes the following contributions:
\begin{itemize}
    \item We formulate molecular representation as learning in function space, treating each molecule as a continuous, SE(3)-invariant function over 3D space and viewing conventional graphs, point clouds, and sequences as discretization schemes of this underlying object.
    \item We propose \method, a unified framework that operationalizes this formulation using canonical implicit neural representations, structured weight tokenization, and a function-space hyper-network to learn distributions over molecular functions end-to-end.
    \item We validate the proposed paradigm on molecular dynamics and property prediction, showing that function-space representations yield downstream behavior that is stable to how molecules are discretized or queried.
\end{itemize}
\begin{figure*}[!t]
    \centering
    \includegraphics[width=\linewidth]{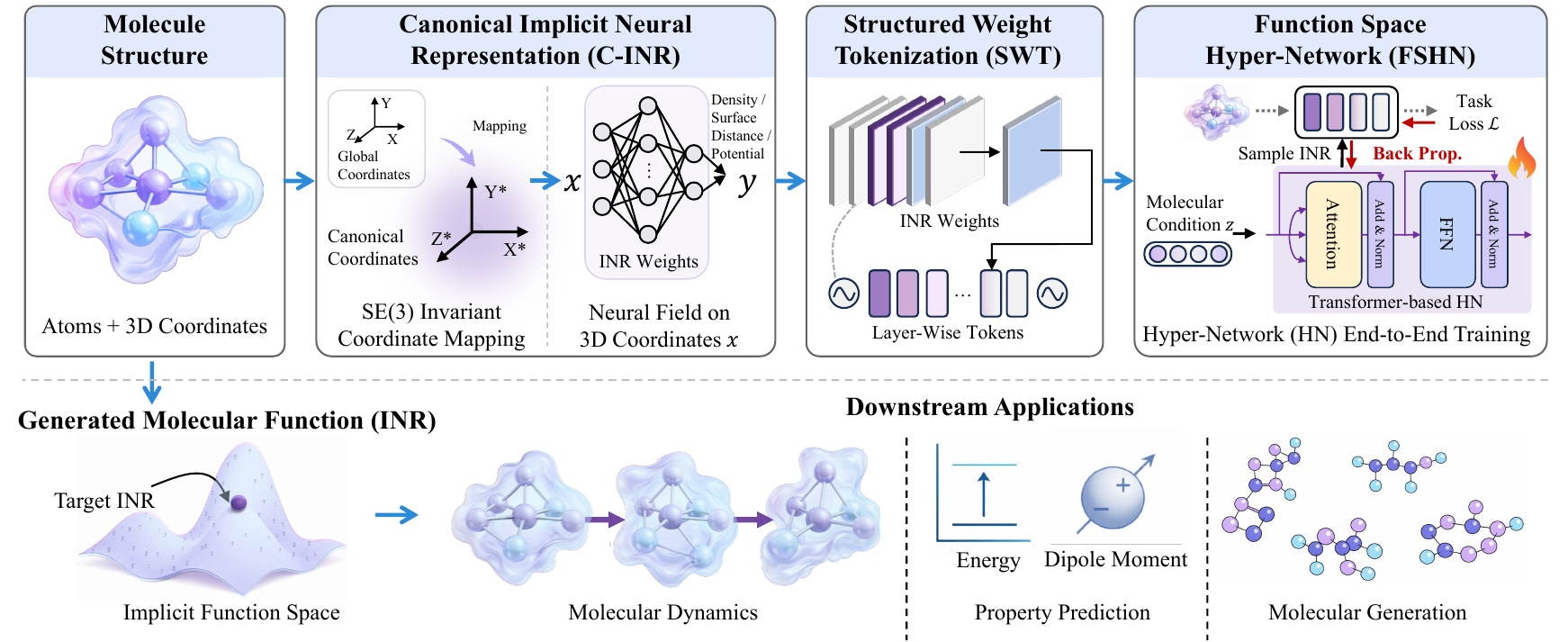}
    \caption{
    \textbf{Overview of \method.}
    \textit{Top}: A molecule is mapped into a canonical coordinate system to construct an implicit neural representation (INR) that is invariant to SE(3) transformations. The INR weights are tokenized in a structured, layer-wise manner to enable transformer-based processing, while the model is trained as a hyper-network that learns to generate task-conditioned INRs. \textit{Bottom}: The resulting INR supports diverse molecular learning tasks, including dynamic surface modeling and property prediction, and can be naturally extended to molecular generation (Appendix \ref{sec:generation}).
    }
    \label{fig:framework}
\end{figure*}

\section{Preliminaries}

\begin{definition}[Molecular Functions]
    We model molecules as continuous functions via INRs defined over 3D Euclidean space.
    Given a molecular configuration described by atomic coordinates $X = \{\mathbf{x}_i \in \mathbb{R}^3\}_{i=1}^N$ and atomic attributes $A = \{a_i\}_{i=1}^N$, a molecular function is defined as
    \begin{equation}
        f_M : \mathbb{R}^3 \rightarrow \mathbb{R}^C,
    \end{equation}
    where $f_M(\mathbf{x})$ encodes molecular quantities at spatial location $\mathbf{x}$, such as density, potential, or other field-valued properties \citep{sun2023dsr,kirchmeyer2024score}.
    In practice, this function is implemented as a MLP with parameters $\theta$, which serves as a compact and resolution-independent encoding of the underlying function.
\end{definition}

\begin{definition}[SE(3) Symmetry]
    Molecular functions admit intrinsic symmetries of 3D space.
    Specifically, for any rigid-body transformation $g=(R,\mathbf{t}) \in \mathrm{SE}(3)$ with $R \in \mathrm{SO}(3)$ and $\mathbf{t} \in \mathbb{R}^3$, the function transforms as
    \begin{equation}
        f_M(\mathbf{x}) \mapsto f_M(R^{-1}(\mathbf{x}-\mathbf{t})).
    \end{equation}
    Since such transformations preserve molecular identity, molecular representations must be invariant to the action of $\mathrm{SE}(3)$.
    We restrict our consideration to $\mathrm{SE}(3)$ rather than the full Euclidean group $\mathrm{E}(3)$, as reflections alter molecular chirality and do not preserve molecular identity.
\end{definition}

\paragraph{Problem Definition.}
Given a molecule specified by atomic coordinates $X$ and atomic attributes $A$, our objective is to learn a continuous molecular function $f_M$ parameterized by an implicit neural representation.
The learned function must provide a resolution-independent encoding of molecular structure, remain invariant under global rigid-body transformations in $\mathrm{SE}(3)$, and support diverse downstream learning objectives.
We aim to enable learning and inference to operate directly in the molecular function space.

\section{Methodology}

We propose \method, a framework for learning and generating molecular representations directly in implicit functional space (Figure~\ref{fig:framework}).
At a high level, \method decomposes the problem into three stages: constructing a symmetry-consistent implicit representation for individual molecules, exposing this representation through a structured tokenization, and learning a conditional hyper-network over such representations.

\subsection{Canonical Implicit Neural Representations (C-INR)}
\label{sec:cinr}

We define a C-INR as a coordinate-based neural function that models a molecule as a continuous field in $\mathbb{R}^3$ while ensuring invariance to global rigid-body transformations.
Given atomic coordinates $X=\{\mathbf{x}_i\}_{i=1}^{N}$, a C-INR parameterizes a molecular function
$f_\theta : \mathbb{R}^3 \rightarrow \mathbb{R}^C$,
whose outputs represent physically meaningful quantities such as signed distance or density.
The key property of C-INR is that the function is evaluated in a molecule-specific canonical coordinate system, such that the parameters $\theta$ depend only on intrinsic molecular geometry.

\paragraph{Canonical Coordinate Mapping.}
We remove pose ambiguity by expressing spatial queries in a canonical reference frame.
Specifically, let $\bar{\mathbf{x}}=\frac{1}{N}\sum_i \mathbf{x}_i$ denote the molecular centroid and let $Q(X)\in SO(3)$ be a rotation-equivariant canonical frame derived from molecular geometry.
We define the coordinate transformation
\begin{equation}
    c(\mathbf{x};X)=Q(X)^{\top}(\mathbf{x}-\bar{\mathbf{x}}),
\end{equation}
and evaluate the molecular field as
\begin{equation}
    f_{\theta}^{\mathrm{C}}(\mathbf{x};X)=f_{\theta}\!\left(c(\mathbf{x};X)\right).
\end{equation}
This formulation standardizes the function domain across different global poses without aligning distinct molecules into a shared global frame.

\paragraph{Canonical Frame Construction.}
The frame $Q(X)$ is constructed from rotation-equivariant vector features extracted from molecular geometry.
Let $\Phi(X)=\{\mathbf{v}_i(X)\}_{i=1}^{N}$ be a mapping that satisfies $\Phi(RX)=\{R\mathbf{v}_i(X)\}_{i=1}^{N}$ for any $R\in SO(3)$ (see Appendix~\ref{sec:encoder} for details).
We form two global axes via invariant aggregation,
\begin{equation}
    \mathbf{u}_k(X)=\sum_{i=1}^{N}\alpha_i^{(k)}(X)\mathbf{v}_i(X), \quad k\in\{1,2\},
\end{equation}
where $\alpha_i^{(k)}(X)$ are rotation-invariant scalars.
An orthonormal basis is then obtained through a modified Gram--Schmidt procedure,
\begin{equation}
    Q(X)=[\mathbf{q}_1(X),\mathbf{q}_2(X),\mathbf{q}_3(X)]\in SO(3),
\end{equation}
with $\mathbf{q}_1=\mathbf{u}_1/\|\mathbf{u}_1\|$, $\mathbf{q}_2$ the normalized component of $\mathbf{u}_2$ orthogonal to $\mathbf{q}_1$, and $\mathbf{q}_3=\mathbf{q}_1\times\mathbf{q}_2$.

\paragraph{SE(3) Invariance.}
C-INR achieves invariance to global rigid-body transformations by construction, through its canonical coordinate interface.
This property follows from the rotation equivariance of the canonical frame and the removal of translations by centering.

\begin{proposition}[Frame equivariance]
    \label{prop:frame_equi}
    For any $R \in SO(3)$, the canonical frame satisfies
    \(
    Q(RX) = R\,Q(X).
    \)
\end{proposition}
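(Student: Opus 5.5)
The plan is to push the rotation through each stage of the frame construction and check that every stage commutes with $R$. Fix $R\in SO(3)$ and write $RX=\{R\mathbf{x}_i\}_{i=1}^N$. By the stated equivariance of the feature map, $\Phi(RX)=\{R\mathbf{v}_i(X)\}_{i=1}^N$, so $\mathbf{v}_i(RX)=R\mathbf{v}_i(X)$ for the same index $i$. This index-wise matching is what the construction needs. By assumption the coefficients $\alpha_i^{(k)}$ are rotation-invariant, so $\alpha_i^{(k)}(RX)=\alpha_i^{(k)}(X)$. Linearity of $R$ then gives
\[
\mathbf{u}_k(RX)=\sum_{i=1}^N \alpha_i^{(k)}(X)\,R\mathbf{v}_i(X)=R\,\mathbf{u}_k(X),\qquad k\in\{1,2\}.
\]

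Next I pass the equivariance through modified Gram--Schmidt. Since $R$ is orthogonal, $\|R\mathbf{u}\|=\|\mathbf{u}\|$ and $\langle R\mathbf{u},R\mathbf{w}\rangle=\langle\mathbf{u},\mathbf{w}\rangle$. It follows that $\mathbf{q}_1(RX)=R\mathbf{u}_1/\|\mathbf{u}_1\|=R\mathbf{q}_1(X)$. For the second axis,
\[
R\mathbf{u}_2-\langle R\mathbf{u}_2,R\mathbf{q}_1\rangle R\mathbf{q}_1=R\bigl(\mathbf{u}_2-\langle\mathbf{u}_2,\mathbf{q}_1\rangle\mathbf{q}_1\bigr),
\]
and normalizing gives $\mathbf{q}_2(RX)=R\mathbf{q}_2(X)$. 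For the third axis I use the identity $(R\mathbf{a})\times(R\mathbf{b})=\det(R)\,R(\mathbf{a}\times\mathbf{b})$, valid for any orthogonal $R$. Since $\det R=1$, this yields $\mathbf{q}_3(RX)=R\mathbf{q}_3(X)$. Stacking the three columns gives $Q(RX)=R\,Q(X)$. For completeness I also note that $Q(X)\in SO(3)$: the columns are orthonormal and right-handed by the cross-product construction.

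The step that needs the most care is the cross product, because it is the only point where the restriction to $SO(3)$ rather than $O(3)$ matters. Under a reflection, $\det R=-1$ and $\mathbf{q}_3$ would flip, which matches the paper's remark that chirality is excluded. I would justify the identity in one line: $\langle (R\mathbf{a})\times(R\mathbf{b}),\mathbf{c}\rangle=\det[R\mathbf{a},R\mathbf{b},\mathbf{c}]=\det R\cdot\det[\mathbf{a},\mathbf{b},R^\top\mathbf{c}]$. This equals $\det R\,\langle R(\mathbf{a}\times\mathbf{b}),\mathbf{c}\rangle$ for every $\mathbf{c}$.

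The remaining point is well-definedness. The proposition implicitly assumes $\mathbf{u}_1(X)\neq 0$ and $\mathbf{u}_2(X)\notin\operatorname{span}\mathbf{u}_1(X)$. I would state this as a standing nondegeneracy hypothesis. It is itself rotation-invariant, because $\|\mathbf{u}_1(RX)\|=\|\mathbf{u}_1(X)\|$ and orthogonality relations are preserved. So whenever $Q(X)$ is defined, $Q(RX)$ is defined too.
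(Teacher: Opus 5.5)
Your proof is correct and takes the same route as the paper's. Invariant-weighted aggregation gives $\mathbf{u}_k(RX)=R\,\mathbf{u}_k(X)$, and normalization, the Gram--Schmidt projection and the cross product (using $\det R=1$) each commute with $R$, so stacking the columns gives $Q(RX)=R\,Q(X)$. The paper additionally checks the hypothesis $\Phi(RX)=\{R\mathbf{v}_i(X)\}$ for its concrete encoder by induction over the message-passing layers, which you legitimately take as given, while you prove the cross-product identity and state the nondegeneracy condition that the paper leaves to a discussion after its proof.
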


\begin{lemma}[SE(3)-invariant coordinate mapping]
    \label{lemma:se3}
    For any rigid-body transformation $g=(R,\mathbf{t}) \in SE(3)$,
    \begin{equation}
        c(R\mathbf{x}+\mathbf{t}; RX+\mathbf{t}) = c(\mathbf{x};X).
    \end{equation}
\end{lemma}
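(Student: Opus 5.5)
The plan is a direct computation that combines the transformation behaviour of the centroid with Proposition~\ref{prop:frame_equi}. Write $X'=RX+\mathbf{t}$ for the transformed configuration, meaning $\mathbf{x}_i'=R\mathbf{x}_i+\mathbf{t}$ for every $i$, and set $\mathbf{x}'=R\mathbf{x}+\mathbf{t}$. We must show that $Q(X')^{\top}(\mathbf{x}'-\bar{\mathbf{x}}')$ equals $Q(X)^{\top}(\mathbf{x}-\bar{\mathbf{x}})$.

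\emph{Step 1: centroid.} By linearity of the mean,
\[
\bar{\mathbf{x}}'=\frac{1}{N}\sum_i (R\mathbf{x}_i+\mathbf{t})=R\bar{\mathbf{x}}+\mathbf{t}.
\]
It follows that $\mathbf{x}'-\bar{\mathbf{x}}'=R(\mathbf{x}-\bar{\mathbf{x}})$, so the translation cancels.

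\emph{Step 2: frame.} We need $Q(RX+\mathbf{t})=R\,Q(X)$. Proposition~\ref{prop:frame_equi} only covers pure rotations, so we first need $Q(X+\mathbf{t})=Q(X)$. This is the one genuine obstacle, because it depends on how $\Phi$ and the scalars $\alpha_i^{(k)}$ are built. I would argue it as follows. The vector features $\mathbf{v}_i$ are constructed from relative positions $\mathbf{x}_i-\mathbf{x}_j$ or from centered coordinates $\mathbf{x}_i-\bar{\mathbf{x}}$, as in the encoder of Appendix~\ref{sec:encoder}. The invariant scalars $\alpha_i^{(k)}$ depend only on such differences. Both are therefore unchanged under a common translation, which gives $\mathbf{u}_k(X+\mathbf{t})=\mathbf{u}_k(X)$ and hence $Q(X+\mathbf{t})=Q(X)$. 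If the appendix does not state this explicitly, I would take translation invariance of $\Phi$ and $\alpha$ as a standing assumption. An equivalent and cleaner route is to note that $Q$ may be computed on the centered cloud $X-\bar{\mathbf{x}}$, and that centering is translation invariant. Combining this with Proposition~\ref{prop:frame_equi} gives
\[
Q(RX+\mathbf{t})=Q(RX)=R\,Q(X).
\]

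\emph{Step 3: combine.} Using $R^{\top}R=I$,
\[
c(\mathbf{x}';X')=(R\,Q(X))^{\top}R(\mathbf{x}-\bar{\mathbf{x}})=Q(X)^{\top}R^{\top}R(\mathbf{x}-\bar{\mathbf{x}})=Q(X)^{\top}(\mathbf{x}-\bar{\mathbf{x}})=c(\mathbf{x};X).
\]
This step is routine. Note that $\det R=1$ is not needed here; it matters only for $Q$ landing in $SO(3)$, which is what the Proposition supplies. I also assume nondegeneracy ($\mathbf{u}_1\neq 0$ and $\mathbf{u}_2\not\parallel\mathbf{u}_1$), so that $Q$ is well defined.
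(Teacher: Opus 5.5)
Your proposal is correct and takes the same route as the paper's proof: centroid equivariance $\bar{\mathbf{x}}'=R\bar{\mathbf{x}}+\mathbf{t}$, then the frame relation $Q(RX+\mathbf{t})=R\,Q(X)$, then cancellation via $R^{\top}R=I$. The one difference is that you explicitly justify $Q(X+\mathbf{t})=Q(X)$ from the encoder's centered and relative coordinates, whereas the paper uses this silently when it applies Proposition~\ref{prop:frame_equi} (stated only for pure rotations) to a full rigid motion.
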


Lemma~\ref{lemma:se3} follows directly from Proposition~\ref{prop:frame_equi} and the definition of the centroid-based coordinate transform.
Consequently, for any two rigid-body equivalent molecular configurations, all spatial queries are mapped to identical canonical coordinates.
Since the molecular field is evaluated as
\begin{equation}
    f_{\theta}^{\mathrm{C}}(\mathbf{x};X)=f_{\theta}\!\left(c(\mathbf{x};X)\right),
\end{equation}
the induced function is invariant under SE(3), and the representation parameters $\theta$ depend only on intrinsic molecular geometry.
The full proofs are provided in Appendix~\ref{sec:proof_symmetry}.

\subsection{Structured Weight Tokenization (SWT)}
\label{sec:tokenization}

C-INRs represent molecules as continuous functions whose parameters are invariant to global SE(3) transformations.
However, these parameters are high-dimensional tensors with rich internal structure, and treating them as flat vectors discards the architectural organization of the underlying neural function.
To enable learning directly over molecular functions, we introduce a structured tokenizer that exposes C-INR weights in a form amenable to sequence models.

A C-INR is parameterized by a collection of layer-wise tensors $\theta=\{\theta^{(l)}\}_{l=1}^{L}$, where each layer implements a distinct functional transformation.
SWT converts this structured parameter set into an ordered token sequence
\begin{equation}
    \theta \;\longrightarrow\; \tau_{1:T}, \quad \tau_t \in \mathbb{R}^{d_{\mathrm{model}}},
\end{equation}
where each token corresponds to a localized parameter block associated with a specific layer and parameter type (e.g., weight or bias).
The token ordering follows the architectural hierarchy of the C-INR, thereby preserving the causal structure of neural computation.

Since parameter tokens are non-exchangeable and carry explicit architectural semantics, we encode their structural roles using learnable positional embeddings.
Each token is indexed by a tuple $(l,r)$ indicating its layer $l$ and parameter role $r$, and its representation is defined as
\begin{equation}
    \tilde{\tau}_t = \tau_t + \mathbf{e}_{\mathrm{layer}}(l) + \mathbf{e}_{\mathrm{role}}(r),
\end{equation}
where $\mathbf{e}_{\mathrm{layer}}$ and $\mathbf{e}_{\mathrm{role}}$ are trainable embeddings.
This encoding injects inductive bias about network depth and parameter functionality, allowing downstream models to reason over molecular functions while respecting the compositional structure of the implicit representation.

\subsection{Function Space Hyper-Network (FSHN)}
\label{sec:hypernetwork}

Under the C-INR formulation, each molecule is represented by a neural function $f_\theta$ whose parameters uniquely specify the corresponding molecular field.
Rather than predicting task-specific outputs, molecular learning can therefore be viewed as learning distributions over such functions.
The intuitive idea is to fit INRs independently for each molecule and subsequently learn over the resulting parameters, but this two-stage procedure introduces optimization-dependent artifacts and prevents consistent reasoning in function space.
We address this limitation by learning a conditional hyper-network over C-INRs in an end-to-end manner.

Specifically, we introduce a function-space hyper-network $G_\phi$ that maps a conditioning variable $z$ to C-INR parameters
\begin{equation}
    \theta = G_\phi(z).
\end{equation}
This defines a conditional distribution over molecular functions via their parameters, i.e., $p(f \mid z) \equiv p(\theta \mid z)$, enabling learning directly at the level of function families.

Building on SWT (Section~\ref{sec:tokenization}), the C-INR parameters are represented as a structured token sequence $\tau_{1:T}$.
The hyper-network models their joint distribution autoregressively,
\begin{equation}
    p(\theta \mid z) = \prod_{t=1}^{T} p(\tau_t \mid \tau_{<t}, z),
\end{equation}
which enforces global coherence while respecting the hierarchical structure of the underlying neural representation.

During training, the hyper-network generates C-INR parameters $\theta = G_\phi(z)$, which instantiate a molecular function $f_\theta$.
Task-specific predictions are obtained by querying this function through $\mathcal{T}(\cdot)$ and optimized using the task loss
\begin{equation}
    \mathcal{L}_{\mathrm{task}}(\phi)
    =
    \mathbb{E}_{(z, y) \sim \mathcal{D}}
    \big[
    \ell(\mathcal{T}(f(\cdot; G_{\phi}(z))), y)
    \big].
\end{equation}
Gradients propagate through the downstream task loss from INR to update $\phi$, yielding a single end-to-end training procedure.
At inference time, molecular functions are generated in a single forward pass conditioned on $z$ and queried according to the downstream task, without additional optimization.

\subsection{Training Procedure}

\paragraph{End-to-End Training Pipeline.}
Given a conditioning input $z$, the hyper-network produces C-INR parameters $\theta = G_\phi(z)$.
These parameters specify a continuous molecular function
\(
f_\theta : \mathbb{R}^3 \rightarrow \mathbb{R}^Z
\)
evaluated on canonicalized coordinates.
Depending on the downstream task, the generated function is either instantiated and queried at spatial locations, or its structured parameters are directly read out as a representation.
In both cases, task losses supervise the outputs and gradients are backpropagated through the C-INR and the hyper-network to update $\phi$.
This procedure allows MolField to learn distributions over molecular functions in a unified and task-agnostic manner.

\paragraph{Conditioning Variables.}
The conditioning variable $z$ specifies which molecular function should be generated.
(1) For \textit{molecular dynamics}, $z$ is formed by concatenating a learned embedding of the molecule with the target time step $t$.
The generated C-INR therefore represents the molecular field of the same molecule at time $t$.
(2) For \textit{property prediction}, $z$ is given by the molecular embedding of the static structure.
The generated parameters serve as a function-space representation of the molecule for downstream regression.
(3) For \textit{molecular generation} (Appendix \ref{sec:generation}), we consider the unconditional setting and sample $z$ from a standard Gaussian prior, $z \sim \mathcal{N}(0, I)$.
The hyper-network thus generates molecular functions solely from the learned distribution, without additional semantic constraints.

\paragraph{Training Objectives.}
The C-INR architecture and canonical coordinate system are shared across tasks.
Tasks differ only in the physical semantics assigned to the function outputs and in the supervision signals used during training.
\textit{(1) Molecular Dynamics.}
The generated C-INR is instantiated as a signed distance function (SDF) \citep{sun2023dsr}
\(f_\theta : \mathbb{R}^3 \rightarrow \mathbb{R}\),
which maps spatial coordinates to distances from the molecular surface.
Given sampled spatial locations $\mathbf{x}$, we supervise the field using
\(
\mathcal{L}_{\text{MD}}
=
\mathcal{L}_{\text{SDF}}
+
\lambda \mathcal{L}_{\text{eikonal}},
\)
where $\mathcal{L}_{\text{SDF}}$ enforces accurate reconstruction of surface distances and
$\mathcal{L}_{\text{eikonal}} = \mathbb{E}_{\mathbf{x}}(\|\nabla_{\mathbf{x}} f_\theta(\mathbf{x})\| - 1)^2$
regularizes gradient norms to encourage valid geometric fields.
\textit{(2) Property Prediction.}
For property prediction, we do not explicitly instantiate the neural field.
Instead, the structured token sequence corresponding to the generated C-INR parameters is directly aggregated into a fixed-dimensional embedding.
This embedding is supervised by multiple physical property targets using a regression objective
$\mathcal{L}_{\text{prop}} = \|\hat{\mathbf{y}} - \mathbf{y}_{gt}\|_1$,
where $\hat{\mathbf{y}}$ denotes the predicted properties.
\textit{(3) Molecular Generation} (Appendix \ref{sec:generation}).
For molecular generation, the generated C-INR is instantiated as a continuous density field
\(
f_\theta : \mathbb{R}^3 \rightarrow \mathbb{R}^d,
\)
whose values represent the probability of the  existence of certain atoms at spatial locations.
We sample spatial points $\mathbf{x}$ from the 3D domain and supervise the predicted density against the ground-truth distribution $\rho_{gt}(\mathbf{x})$ using
$\mathcal{L}_{\text{gen}} = \mathbb{E}_{\mathbf{x}} \| f_\theta(\mathbf{x}) - \rho_{gt}(\mathbf{x}) \|^2$.

\section{Main Results}

\begin{table*}[!t]
    \centering
    \caption{
        \textbf{Evaluation of molecular dynamics.}
        We report volumetric IoU (IoU ↑), chamfer distance (CD ↓), and normal consistency (NC ↑), where the best performance is \textbf{boldface}.
        \method consistently achieves the best overall performance across datasets.
    }
    \label{tab:md_main}
    \resizebox{\linewidth}{!}{
        \begin{tabular}{l | ccc | ccc | ccc | ccc}
            \toprule
                      & \multicolumn{3}{c|}{NDF} & \multicolumn{3}{c|}{DSR} & \multicolumn{3}{c|}{CanFields} & \multicolumn{3}{c}{\method (Ours)}                                                                                                                                              \\ \cmidrule(lr){2-4} \cmidrule(lr){5-7} \cmidrule(lr){8-10} \cmidrule(lr){11-13}
            Dataset   & IoU$\uparrow$            & CD$\downarrow$           & NC$\uparrow$                   & IoU$\uparrow$                      & CD$\downarrow$  & NC$\uparrow$ & IoU$\uparrow$   & CD$\downarrow$  & NC$\uparrow$    & IoU$\uparrow$   & CD$\downarrow$  & NC$\uparrow$    \\ \midrule
            DIMS      & 0.9104                   & 0.0006                   & 0.9512                         & 0.9316                             & \textbf{0.0003} & 0.9671       & \textbf{0.9391} & 0.0005          & \textbf{0.9719} & 0.9353          & 0.0006          & 0.9645          \\
            YiiP      & 0.8120                   & 0.0006                   & 0.8515                         & 0.8425                             & \textbf{0.0003} & 0.8737       & 0.8404          & 0.0004          & 0.8354          & \textbf{0.9270} & 0.0006          & \textbf{0.9565} \\
            NahA      & 0.7968                   & 0.0008                   & 0.8124                         & 0.8265                             & 0.0005          & 0.8372       & 0.8362          & 0.0004          & 0.8027          & \textbf{0.8844} & \textbf{0.0003} & \textbf{0.8949} \\
            I-FABP    & 0.8729                   & 0.0008                   & 0.9196                         & 0.8907                             & 0.0005          & 0.9382       & 0.9135          & 0.0006          & 0.9270          & \textbf{0.9392} & \textbf{0.0002} & \textbf{0.9721} \\
            FRODA     & 0.8087                   & 0.0007                   & 0.8893                         & 0.8318                             & 0.0004          & 0.9098       & 0.8283          & \textbf{0.0003} & 0.8895          & \textbf{0.8687} & 0.0006          & \textbf{0.9414} \\
            adk\_equi & 0.7215                   & 0.0025                   & 0.7620                         & 0.7526                             & 0.0020          & 0.7914       & 0.7491          & 0.0019          & 0.7977          & \textbf{0.9131} & \textbf{0.0004} & \textbf{0.9522} \\ \midrule
            Average   & 0.8204                   & 0.0010                   & 0.8643                         & 0.8460                             & 0.0007          & 0.8862       & 0.8511          & 0.0007          & 0.8707          & \textbf{0.9113} & \textbf{0.0005} & \textbf{0.9469} \\
            \bottomrule
        \end{tabular}
    }
\end{table*}

\begin{table}[!t]
    \centering
    \caption{
        \textbf{Evaluation of molecular property prediction on QM9.}
        We report MAE on five properties:
        HOMO energy ($\varepsilon_{\mathrm{HOMO}}$),
        dipole moment ($\mu$),
        electronic spatial extent ($R^2$),
        polarizability ($\alpha$),
        and zero-point vibrational energy (ZPVE).
        \textbf{A.R.} denotes the average ranking across all five properties
        (lower is better; ties receive averaged ranks).
    }
    \label{tab:pp_main}
    \resizebox{\linewidth}{!}{
        \begin{tabular}{l | ccccc | c}
            \toprule
            \multirow{2}{*}{Method} & $\varepsilon_{\mathrm{HOMO}}$ & $\mu$      & $R^2$         & $\alpha$      & ZPVE          & \multirow{2}{*}{A.R.} \\
                                    & meV                           & mD         & m$\alpha_0^2$ & m$\alpha_0^3$ & meV           &                       \\
            \midrule
            SchNet                  & 41                            & 33         & 73            & 235           & 1.70          & 13.8                  \\
            Molformer               & 25                            & 28         & 350           & 41            & 2.05          & 11.5                  \\
            Cormorant               & 34                            & 38         & 961           & 85            & 2.03          & 16.8                  \\
            SEGNN                   & 24                            & 23         & 660           & 60            & 1.62          & 12.2                  \\
            EGNN                    & 29                            & 29         & 106           & 71            & 1.55          & 11.8                  \\
            PaiNN                   & 28                            & 12         & 66            & 45            & 1.28          & 7.9                   \\
            EQGAT                   & 20                            & 11         & 382           & 53            & 2.00          & 9.9                   \\
            NoisyNodes              & 20                            & 25         & 700           & 52            & 1.16          & 9.6                   \\
            GNS-TAT+NN              & 17                            & 21         & 650           & 47            & \textbf{1.08} & 7.8                   \\
            DimeNet++               & 25                            & 30         & 331           & 44            & 1.21          & 10.0                  \\
            SphereNet               & 23                            & 26         & 292           & 46            & 1.12          & 8.5                   \\
            MACE                    & 22                            & 15         & 210           & 38            & 1.23          & 6.9                   \\
            GA-GNN                  & 21                            & 11         & 63            & 45            & 1.18          & 5.1                   \\
            \midrule
            LieConv                 & 30                            & 32         & 800           & 84            & 2.28          & 16.2                  \\
            Equiformer v1           & 15                            & 11         & 251           & 46            & 1.26          & 6.9                   \\
            Equiformer v2           & \textbf{12}                   & \textbf{9} & 182           & 39            & 1.21          & 4.0                   \\
            \midrule
            FuncMol                 & 46                            & 15         & 174           & 36            & 3.64          & 10.3                  \\
            \method (Ours)          & 14                            & \textbf{9} & \textbf{62}   & \textbf{21}   & 1.16          & \textbf{1.8}          \\
            \bottomrule
        \end{tabular}
    }
\end{table}

\begin{table}[t]
    \centering
    \caption{
        \textbf{Ablation study on key components of \method.}
        Removing any component consistently degrades performance.
    }
    \label{tab:ablation}
    \resizebox{\linewidth}{!}{
        \begin{tabular}{l|cc|cc}
            \toprule
                                        & \multicolumn{2}{c|}{Dynamics} & \multicolumn{2}{c}{Property}                                         \\ \cmidrule(lr){2-3}\cmidrule(lr){4-5}
            Variant                     & IoU$\uparrow$                 & NC$\uparrow$                 & $\mu$$\downarrow$ & $R^2$$\downarrow$ \\ \midrule
            Full model (\method)        & \textbf{0.9113}               & \textbf{0.9469}              & \textbf{9}        & \textbf{63}       \\ \midrule
            \multicolumn{5}{l}{\textit{(a) C-INR Analysis}}                                                                                    \\ \midrule
            \quad w/o Canon.            & 0.8560                        & 0.8980                       & 23                & 146               \\
            \quad Rand. Proj.           & 0.8066                        & 0.8532                       & 35                & 285               \\ \midrule
            \multicolumn{5}{l}{\textit{(b) SWT Analysis}}                                                                                      \\ \midrule
            \quad Flatten Weights       & 0.8925                        & 0.9265                       & 12                & 77                \\
            \quad w/o PE                & 0.9025                        & 0.9323                       & 14                & 72                \\ \midrule
            \multicolumn{5}{l}{\textit{(c) FSHN Analysis}}                                                                                     \\ \midrule
            \quad Transformer $\to$ MLP & 0.8795                        & 0.9125                       & 14                & 91                \\
            \quad w/o Auto-Regressive   & 0.9010                        & 0.9350                       & 11                & 74                \\
            \bottomrule
        \end{tabular}
    }
\end{table}

\paragraph{Molecular Dynamics.}
Following \citet{sun2023dsr}, we evaluate \method on molecular dynamics simulations to assess its ability to model time-varying molecular structures in a continuous and resolution-independent manner.
Experiments are conducted on multiple protein trajectories from \citet{sun2023dsr}, covering diverse motion regimes from smooth conformational transitions to highly fluctuating equilibrium dynamics.
We compare against NDF \citep{sun2022topology}, DSR \citep{sun2023dsr}, and CanFields \citep{wang2025canfields}, using volumetric IoU, Chamfer Distance (CD), and Normal Consistency (NC) as evaluation metrics.
Table~\ref{tab:md_main} reports quantitative results on dynamic molecular surface reconstruction.
Across all systems, \method achieves the best average performance, with consistent improvements in both volumetric accuracy and surface normal alignment.
The gains are most pronounced on trajectories with substantial conformational motion (e.g., YiiP, NahA, I-FABP, and adk\_equi), indicating more accurate recovery of evolving molecular geometry.
Compared to DSR, which optimizes an implicit network per trajectory, \method benefits from SE(3)-invariant function-space generation, enabling it to capture shared dynamical structure and reduce optimization-induced distortions, particularly in complex motion regimes.

\paragraph{Molecular Property Prediction.}
We evaluate \method on QM9 \citep{ramakrishnan2014quantum}, following the standard 100K/18K/13K train/validation/test split of \citet{anderson2019cormorant}.
Five scalar targets are considered: $\varepsilon_{\mathrm{HOMO}}$, $\mu$, $R^2$, $\alpha$, and ZPVE, with mean absolute error (MAE) as the evaluation metric.
Baselines include: (1) GNN-based methods, including SchNet~\citep{schutt2017schnet}, Molformer~\citep{wu2023molformer}, Cormorant~\citep{anderson2019cormorant}, SEGNN~\citep{brandstettergeometric}, EGNN~\citep{satorras2021n}, PaiNN~\citep{schutt2021equivariant}, EQGAT~\citep{liao2022equiformer}, NoisyNodes~\citep{godwinsimple}, GNS-TAT~\citep{zaidi2022pre} DimeNet++~\citep{gasteiger2020fast}, SphereNet~\citep{liu2021spherical}, MACE~\citep{batatia2022mace}, GA-GNN~\citep{petersen2026design}; (2) CNN and transformer-based architectures, such as LieConv~\citep{batatia2023general} and Equiformer~\citep{liao2022equiformer,liao2023equiformerv2}; and (3) function-based approaches, represented by FuncMol~\cite{kirchmeyer2024score}.
As shown in Table~\ref{tab:pp_main}, \method achieves the lowest MAE on spatial distribution–related properties, including $\mu$ (9 mD), $R^2$ (62 m$\alpha_0^2$), and $\alpha$ (21 m$\alpha_0^3$), consistent with its continuous molecular-field representation.
It remains competitive on $\varepsilon_{\mathrm{HOMO}}$ (14 meV) and matches strong baselines on ZPVE (1.16 meV), indicating no degradation on properties dominated by local bonding patterns.
These results suggest that function-space parameterization effectively captures global spatial structure while maintaining stable geometric generalization.

\paragraph{Ablation Study.}
Table~\ref{tab:ablation} analyzes the contribution of each core component in \method.
Removing any component consistently degrades performance on both molecular dynamics and property prediction, indicating that the framework relies on their joint design.
\textit{(1) C-INR.}
Removing canonical coordination or replacing canonical coordination with random projections leads to substantial performance drops, confirming the importance of SE(3)-consistent function evaluation.
\textit{(2) SWT.}
Flattening weights or removing positional encodings degrades accuracy, showing that preserving the structured organization of function parameters is critical.
\textit{(3) FSHN.}
Replacing the Transformer with an MLP or removing autoregressive generation consistently hurts performance, highlighting the importance of modeling dependencies among function components.

\paragraph{Discussion on Molecular Generation.}

While our primary focus is on representation learning and function-space modeling,
the proposed framework also admits a natural extension to molecular generation.
We provide a preliminary generative formulation and qualitative results in Appendix~\ref{sec:generation}.

\begin{figure}[!t]
    \centering

    \subfigure{\includegraphics[width=0.49\linewidth]{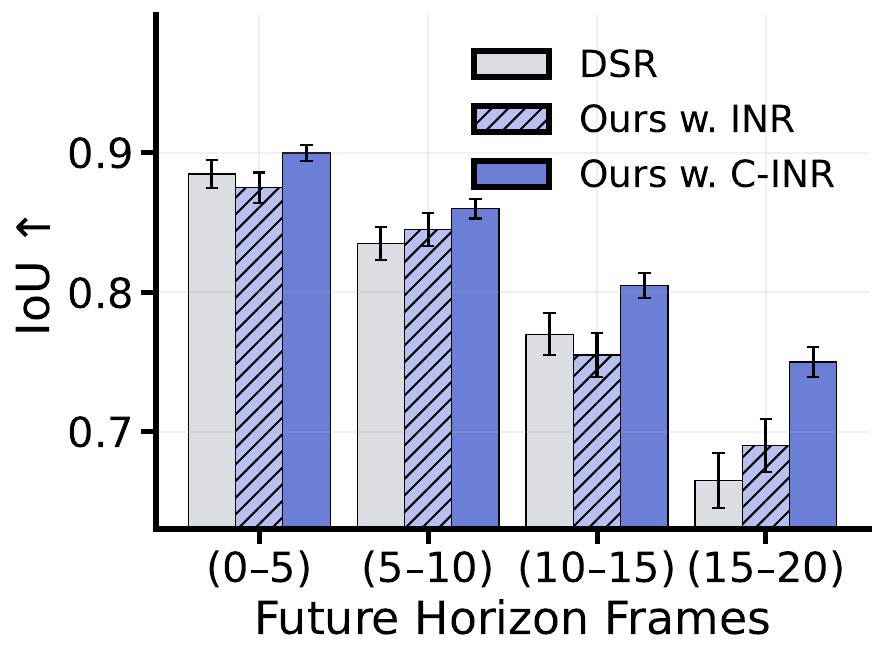}}
    \subfigure{\includegraphics[width=0.49\linewidth]{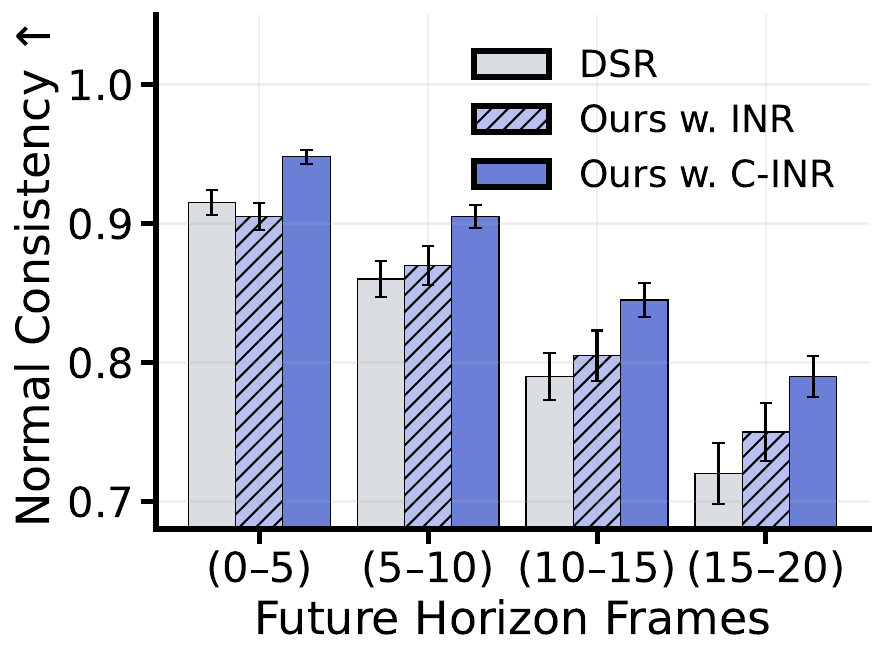}}
    \caption{
        \textbf{Long-horizon prediction on molecular dynamics.}
        Quantitative comparison of future surface prediction over increasing time horizons. \method (ours) yields more accurate and stable long-term predictions, particularly for extended horizons.
    }
    \label{fig:md_long_horizon}
\end{figure}

\begin{figure}[!t]
    \centering

    \subfigure{\includegraphics[width=0.49\linewidth]{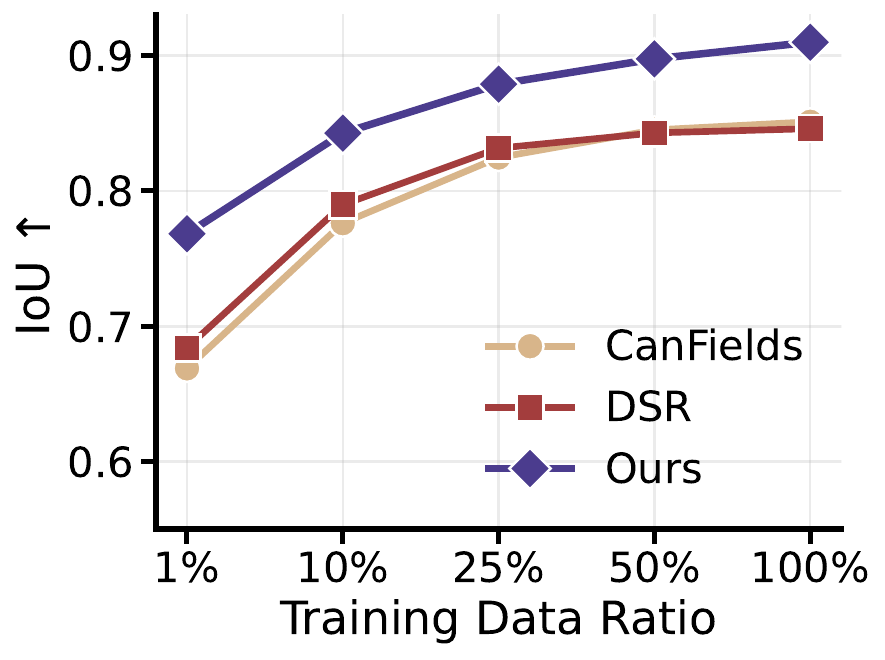}}
    \subfigure{\includegraphics[width=0.49\linewidth]{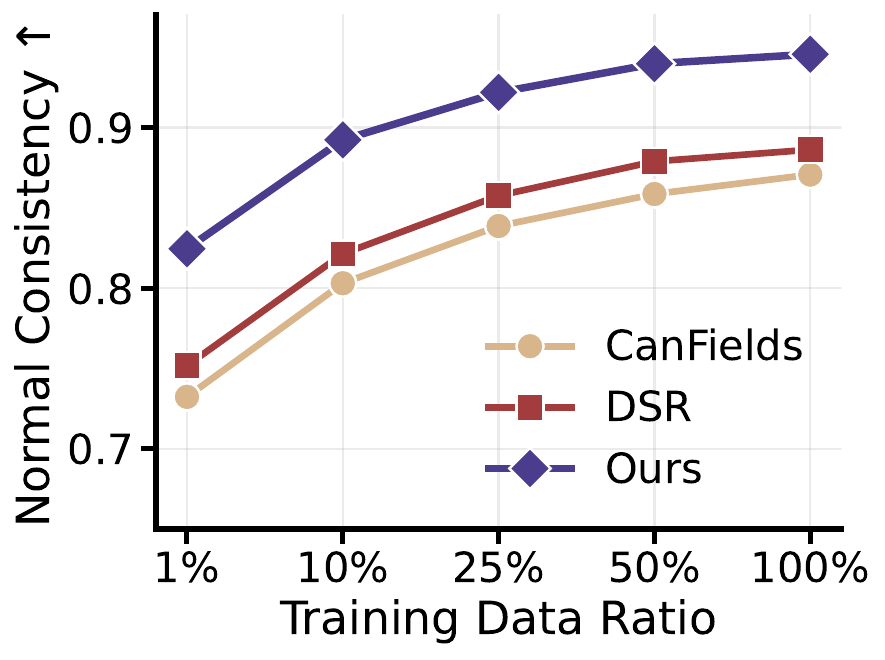}}
    \caption{
        \textbf{Data efficiency on molecular dynamics.}
        Performance of different methods under varying training data ratios. \method (ours) consistently achieves higher IoU and NC across all data regimes, demonstrating improved data efficiency.
    }
    \label{fig:md_data_efficiency}
\end{figure}

\section{Why Function-Space Representations of \method Generalize? }

We analyze why representing molecules as continuous functions leads to improved generalization across temporal horizons, data regimes, and downstream tasks.
Our central claim is that \method generalizes by learning a shared family of molecular functions, rather than task-specific discrete embeddings.
Under this formulation, molecular forms correspond to different query operators over the same underlying object, and generalization amounts to extrapolating or re-querying this object instead of relearning new representations.
The following analyses provide empirical evidence.

\paragraph{Long-Horizon Prediction as Function Extrapolation.}
To assess temporal generalization, we evaluate long-horizon prediction of molecular surfaces beyond the observed training window.
Figure~\ref{fig:md_long_horizon} reports IoU and Normal Consistency over increasing future horizons.
While DSR fits an implicit network independently for each trajectory, \method generates molecular fields from a learned distribution over functions.
This enables the model to extrapolate the underlying molecular field to unseen time steps, rather than relying on short-term geometric correlations.
As the prediction horizon increases, the performance gap widens, with canonical implicit representations maintaining substantially higher geometric fidelity for distant future frames.
These results indicate that \method captures shared dynamical structure at the level of molecular functions, instead of memorizing frame-wise surface variations, which is critical for stable long-term molecular dynamics modeling.

\paragraph{Data Efficiency from Sampling-Independent Representations.}
Figure~\ref{fig:md_data_efficiency} evaluates performance under varying fractions of training data.
\method consistently outperforms all baselines across data regimes, with especially large advantages under sparse supervision.
In discrete representations, reducing data directly alters the representation by changing the observed molecular structure.
In contrast, in function-space learning, sparse observations merely provide fewer queries of the same underlying object.
As a result, \method maintains stable performance even when the sampling density is low, achieving comparable accuracy with only 25\% of the data to DSR trained on the full dataset.
This behavior suggests that amortized generation of molecular functions enables the model to reuse knowledge across molecules and time steps, yielding a structured prior over molecular geometry that remains robust in low-data regimes.

\begin{figure}[!t]
    \centering
    \subfigure{\includegraphics[width=0.49\linewidth]{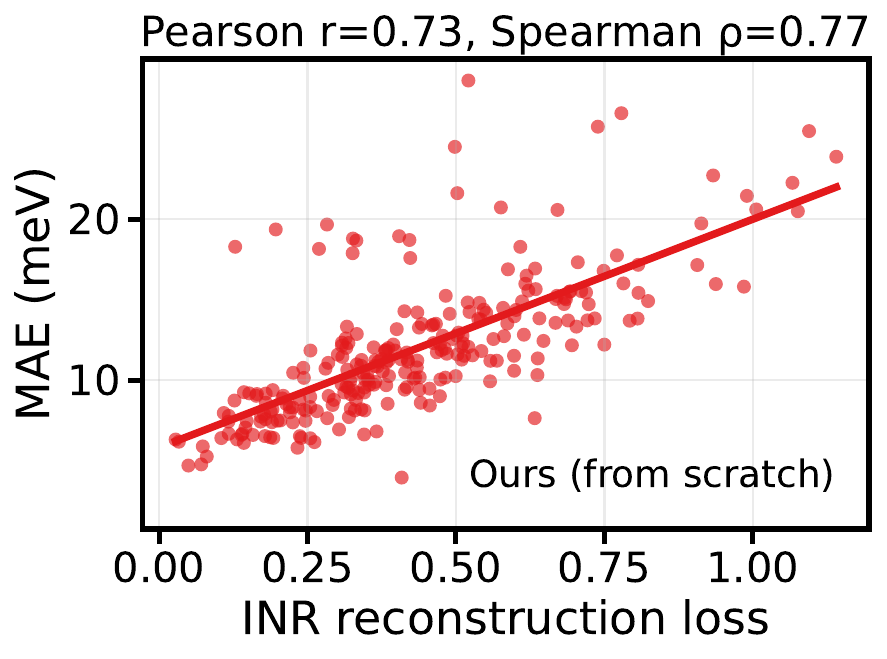}}
    \subfigure{\includegraphics[width=0.49\linewidth]{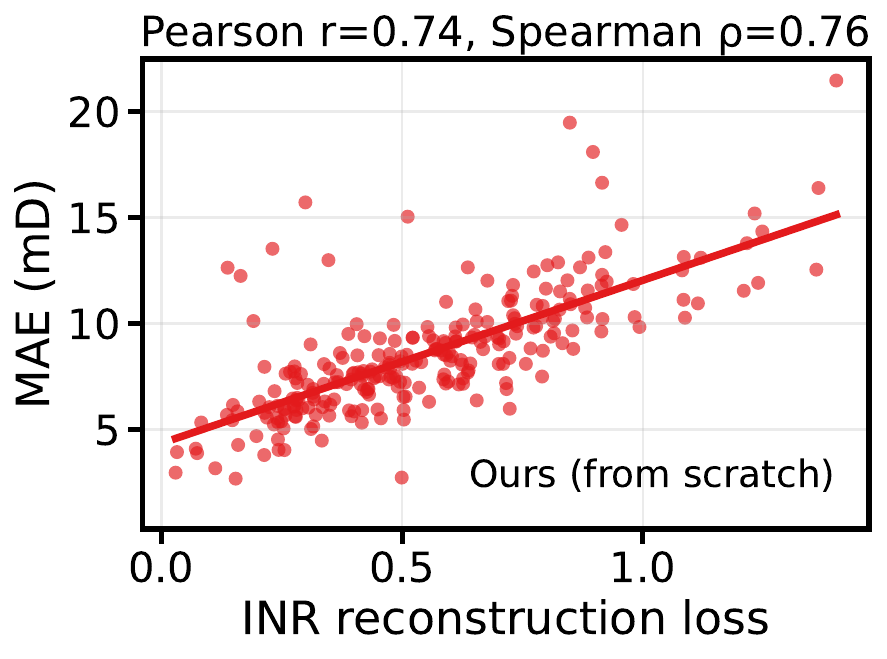}}

    \subfigure{\includegraphics[width=0.49\linewidth]{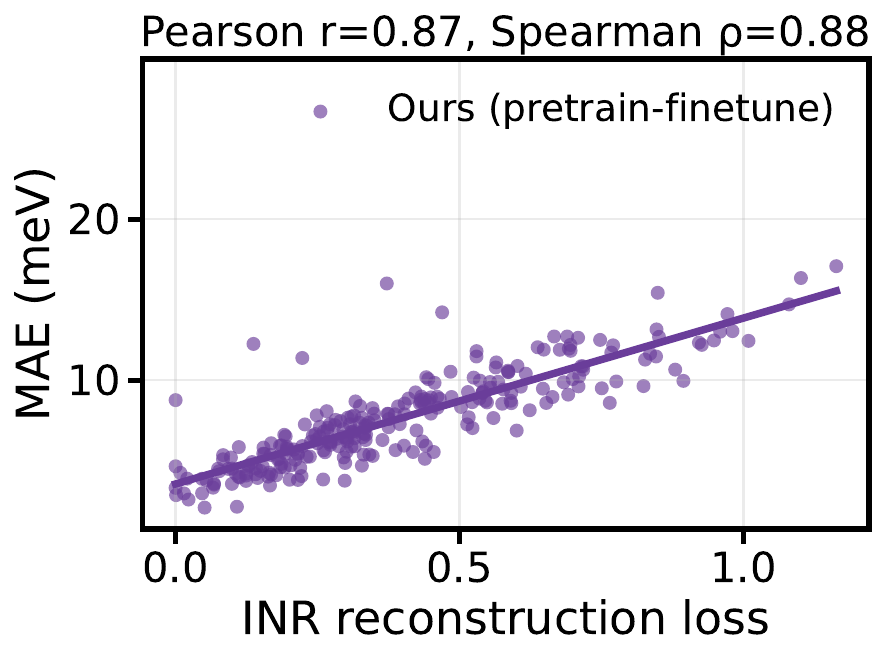}}
    \subfigure{\includegraphics[width=0.49\linewidth]{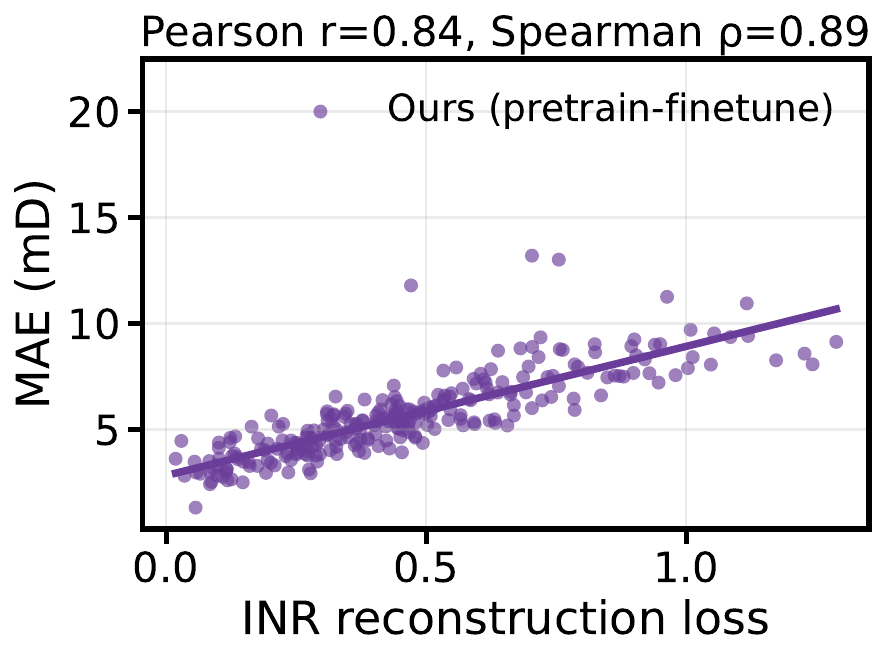}}

    \caption{
        \textbf{Correlation between INR reconstruction loss and property prediction error (MAE) of \method.}
        Strong Pearson and Spearman correlations indicate that more accurate function-space reconstructions lead to improved property prediction. Pretraining the INR on molecular generation further strengthens this relationship and consistently reduces prediction error.
    }
    \vspace{-15pt}
    \label{fig:pp_dist}
\end{figure}

\paragraph{Function Fidelity Mediates Downstream Performance.}
Figure~\ref{fig:pp_dist} analyzes the relationship between INR reconstruction loss on density prediction
and property prediction error.
Across all targets, we observe strong positive Pearson and Spearman correlations: molecules whose fields are reconstructed more accurately consistently yield lower prediction errors.
This trend holds both when training from scratch and under a pretrain--finetune regime, with pretraining further strengthening the correlation and concentrating predictions at lower MAE values.
These results indicate that property prediction in \method is mediated by the quality of the learned molecular function, rather than by task-specific shortcut features.
In other words, downstream performance improves because the shared function representation becomes more accurate, supporting the view that molecular learning proceeds through the function space itself.

\definecolor{dropbad}{RGB}{214,39,40}
\definecolor{dropgood}{RGB}{44,160,44}

\begin{table}[t]
    \centering
    \caption{
        \textbf{Robustness under input corruption}, measured by $\alpha$ (lower is better).
        Values in parentheses denote absolute degradation from the clean setting.
        Orig.~3D and MolField-3D operate on discrete graphs, while MolField-INR predicts directly in function space.
        Methods marked with * use Equiformer \citep{liao2022equiformer} as the backbone.
    }
    \label{tab:corruption_diagnosis}
    \resizebox{\linewidth}{!}{
        \begin{tabular}{l|ccc}
            \toprule
            Corruption & Orig.\ 3D*                           & MolField-3D*                        & MolField-INR                        \\
            \midrule
            0\%        & 46                                   & 25                                  & 21                                  \\ \midrule
            25\%       & 64 {\color{dropbad}\textbf{(↓18)}}   & 33 {\color{dropgood}\textbf{(↓8)}}  & 26 {\color{dropgood}\textbf{(↓5)}}  \\ \midrule
            50\%       & 91 {\color{dropbad}\textbf{(↓45)}}   & 54 {\color{dropgood}\textbf{(↓29)}} & 45 {\color{dropgood}\textbf{(↓24)}} \\ \midrule
            75\%       & 158 {\color{dropbad}\textbf{(↓112)}} & 68 {\color{dropgood}\textbf{(↓43)}} & 59 {\color{dropgood}\textbf{(↓38)}} \\
            \bottomrule
        \end{tabular}
    }

\end{table}


\paragraph{Robustness to Discretization.}
Table~\ref{tab:corruption_diagnosis} compares robustness under increasing input corruption for three settings: discrete 3D graphs, 3D graphs discretized from INR, and direct function-level prediction from INR.
Note that we use Equiformer \citep{liao2022equiformer} as the backbone for learning 3D structures.
As corruption increases, the performance over original 3D structures degrades rapidly, indicating high sensitivity to perturbations in discrete structural representations.
Discretizing from INR already reduces this degradation, suggesting that the continuous field acts as a stabilizing intermediate representation.
Most notably, direct prediction in function space (\method-INR) exhibits the smallest performance drop under all corruption levels.
Even at 75\% corruption, its absolute error increase remains substantially lower than both discrete and discretized counterparts.
This behavior supports our formulation that discrete molecular representations act as particular sampling or readout schemes of an underlying continuous object.
By operating directly on the function itself, \method avoids brittleness introduced by explicit discretization and achieves substantially greater robustness.
\section{Related Work}

\paragraph{Molecular Representations: Discrete and Continuous.}

Molecular representation fundamentally shapes what structural and physical information can be learned and how models generalize.
Most existing approaches rely on \emph{discrete surrogates}, including 1D sequence-based encodings such as SMILES or SELFIES \citep{weininger1988smiles,krenn2020self}, 2D graph-based message-passing networks \citep{gilmer2017neural,schutt2017schnet,gasteiger2020fast,schutt2021equivariant}, and 3D coordination-based equivariant networks \citep{satorras2021n,liao2022equiformer,batatia2022mace}.
Other works adopt continuous but still discretized representations, such as voxel grids or Gaussian density expansions \citep{liu2020neural,batatia2023general,wang2025canfields,li2024full,gainza2020deciphering}, which remain resolution-dependent and tied to specific sampling schemes.
In contrast, we treat the molecule itself as a continuous object and model it directly as a function over 3D space.
Rather than learning embeddings of particular discretizations (e.g., graphs or point clouds), we learn in function space and view discrete representations as query schemes over an underlying molecular field.
This shifts the representation object from task-specific structures to a shared continuous function, enabling transfer across tasks and discretizations.

\paragraph{Implicit Neural Representations for Molecules.}

Implicit neural representations model continuous signals as coordinate-based neural functions $f_\theta:\mathbf{x}\mapsto\mathbf{y}$ \citep{mildenhall2021nerf,martin2021nerf,essakine2024we}.
Several recent works apply INRs to molecular data, primarily for molecular surface reconstruction and dynamics modeling \citep{sverrisson2021fast,wu2024surfacevqmae,sun2023dsr,wu2025generalized}.
Another line of work represents molecules as continuous density fields for molecular generation and property prediction \citep{kirchmeyer2024score,lee2024pretraining}.
However, in these approaches, implicit functions are typically introduced as task-specific models or architectural choices, and are not treated as reusable molecular representations.
In contrast, we treat the implicit function itself as the molecular representation.
We learn a distribution over molecular functions via a hyper-network and reuse these functions across molecular dynamics, property prediction, and generation.
This elevates INRs from task-specific modeling techniques to a unified and transferable representation in function space.
\section{Conclusion}

We presented MolField, which formulates molecular learning as learning in function space by representing molecules as continuous fields rather than discrete structures. Under this view, graphs, point clouds, and sequences are different query schemes of the same underlying object.
This shift improves generalization, enabling stable long-horizon prediction, higher data efficiency, and robustness to discretization, with downstream performance governed by function fidelity.
We believe this perspective offers a principled foundation for molecular representation learning and a natural bridge between molecular modeling and neural field methods. 
\section*{Impact Statement}

This work introduces MolField, a function-space formulation of molecular representation learning that models molecules as continuous fields rather than discrete structures.
By shifting the representation object from task-specific embeddings to shared molecular functions, our framework aims to improve generalization, robustness, and transferability across molecular learning tasks.
As molecular machine learning increasingly influences scientific discovery and engineering practice, it is important to consider the broader implications of such representational advances.

\paragraph{Potential Benefits.}
MolField may benefit multiple scientific and engineering domains.
In computational chemistry \citep{jensen2017introduction} and drug discovery \citep{berdigaliyev2020overview}, more stable and transferable molecular representations could reduce data requirements, improve long-horizon simulation accuracy, and enable more reliable property prediction and molecular design.
In materials science \citep{wei2019machine}, function-space representations may facilitate modeling complex spatially continuous phenomena such as charge distributions or reaction fields.
More broadly, our formulation contributes to scientific machine learning by providing a principled interface between machine learning models and continuous physical systems, potentially enabling better integration of data-driven and physics-based approaches.

\paragraph{Risks and Misuse Considerations.}
As with many advances in molecular modeling, improved representation learning may lower barriers to molecular design and optimization.
In adversarial settings, such capabilities could be misused to assist in designing harmful chemical compounds or to accelerate unsafe experimentation.
While MolField itself does not introduce new generative objectives beyond existing molecular modeling frameworks, its improved generalization and robustness could amplify the effectiveness of downstream applications.
We therefore emphasize that responsible deployment should follow existing best practices in chemical safety, regulatory compliance, and dual-use risk assessment, and that access to high-capacity molecular design systems should be carefully managed.

\paragraph{Limitations and Scope.}
MolField focuses on improving molecular representations and does not directly address questions of chemical validity, synthetic feasibility, or biological safety of generated molecules.
Its benefits depend on the quality and diversity of available training data and do not substitute for experimental validation or domain expertise.
Furthermore, the current framework models static or moderately dynamic molecular fields and does not explicitly capture reactive processes or electronic dynamics at quantum-mechanical resolution \citep{merz2014using}.
These limitations constrain the scope of its applicability and should be considered when interpreting its results.

\paragraph{Future Directions.}
Future work may explore integrating function-space representations with physically grounded simulation frameworks, such as neural operators or differentiable solvers, to better capture complex molecular processes.
Extending the approach to larger biomolecular systems, reactive chemistry, or multimodal molecular data could further broaden its applicability.
From a societal perspective, developing technical safeguards, dataset governance strategies, and evaluation protocols for dual-use molecular modeling systems will be important to ensure that advances in representation learning are deployed responsibly and for beneficial scientific purposes.


\bibliography{citation}
\bibliographystyle{icml2026}


\newpage
\appendix
\onecolumn
\section{Additional Related Work}

\subsection{Implicit Neural Representation Generation}
Weight space generation studies how to directly model and generate the parameters of neural networks, rather than their outputs or latent embeddings. Early work introduces hypernetworks as conditional generators that map an input code or contextual signal to the weights of a target network~\citep{ha2017hypernetworks}. From this perspective, neural networks are treated as points in a structured weight space, where shared statistical regularities across tasks, instances, or signals can be captured and exploited. This formulation enables generalization at the level of functions themselves, and has been explored in contexts such as meta-learning, conditional computation, and neural function priors~\citep{finn2017model,wilson2020bayesian}.
Within this broader framework, INRs provide a natural and challenging testbed for weight space generation. An INR encodes a continuous signal as a neural function whose parameters fully determine the represented object. Classical INR construction relies on per-instance optimization via gradient descent, which is computationally expensive and often unstable under sparse or noisy observations~\citep{park2019deepsdf,sitzmann2020implicit}. Weight space generation offers an alternative by amortizing this optimization process: instead of iteratively fitting network parameters, a learned generator predicts the weights of an INR directly from observations, effectively instantiating a neural function in a single forward pass. This approach enables fast inference and highlights a principled connection between INRs, hypernetworks, and learned priors over neural functions~\citep{chen2022transformers,sitzmann2020metasdf}

\subsection{Molecule Generation}
Early molecular generative models primarily focused on 2D representations, treating molecules as SMILES strings or discrete graphs and modeling their distributions using VAEs~\citep{gomez2018automatic,jin2018junction}, GANs~\citep{de2018molgan}, normalizing flows~\citep{zang2020moflow} or more recently discrete diffusion models on graphs~\citep{vignac2023digress,liu2024graph,wang2025graph}. These approaches have achieved notable success in topology generation and property-conditioned design, but they do not explicitly model three-dimensional molecular geometry.
More recent work has shifted toward 3D molecule generation, where atoms are represented with continuous coordinates and categorical types. Representative approaches include normalizing flows~\citep{rezende2015variational,kohler2020equivariant,garcia2021n}, as well as a broad family of diffusion-based methods, encompassing equivariant diffusion models~\citep{hoogeboom2022equivariant,huang2023mdm}, flow matching formulations~\citep{song2023equivariant}, and score-based generative models~\citep{o20233d,zhao2025controllable}. Beyond point-cloud representations, recent work has further extended score-based generation to INRs, modeling molecules as continuous spatial fields rather than discrete atomic sets~\citep{kirchmeyer2024score}.

\subsection{Molecular Surface Reconstruction}

Molecular surfaces provide a compact and physically meaningful interface between biomolecules and their environment, encoding geometric and chemical cues that govern key interactions such as protein--ligand binding and allosteric regulation. Accurately modeling these surfaces as they evolve over time is therefore essential for understanding functional conformational changes and for enabling downstream tasks that rely on differentiable geometry.
Recent advances in dynamic molecular surface modeling leverage implicit neural representations to describe protein surfaces as continuous, time-dependent fields. Dynamical Surface Representation (DSR) models a protein surface as the zero-level set of a signed distance function (SDF) parameterized by an implicit neural network, with time explicitly incorporated as an input~\citep{sun2023dsr}. This formulation yields a continuous representation over both space and time, enabling smooth interpolation and extrapolation of surface dynamics. Training is performed directly from surface point clouds, without requiring precomputed SDF supervision, by enforcing manifold consistency and Eikonal regularization, while latent codes are used to capture protein-specific variability. To improve generalization across diverse protein families, subsequent work extends this framework with a mixture-of-experts architecture (MoE-DSR), where multiple implicit experts specialize in different structural distributions~\citep{wu2025generalized}. A learned routing mechanism extracts geometric information from the initial conformation to dynamically select experts, enhancing robustness to unseen proteins.

\subsection{Molecular Property Prediction}
Molecular property prediction aims to infer physicochemical and quantum-mechanical properties from molecular structure and has been extensively studied under a variety of representation and modeling paradigms. Graph neural networks constitute the dominant framework in this area, representing molecules as atom--bond graphs and learning molecular representations through iterative message passing~\citep{gilmer2017neural, schutt2017schnet,wang2024gft,wang2025towards,wang2025beyond,wang2025generative}.
To more faithfully capture three-dimensional molecular geometry, subsequent approaches explicitly incorporate geometric information such as interatomic distances, bond angles, and higher-order features, giving rise to directional message passing architectures exemplified by DimeNet and its variants~\citep{gasteiger2020directional, gasteiger2020fast}. More recent work enforces geometric symmetries via equivariant neural networks, including EGNN and SE(3)-equivariant architectures, which guarantee consistent transformations under rotations and translations~\citep{satorras2021n, fuchs2020se}. In parallel, transformer-based models have also been explored for molecular property prediction, leveraging global attention mechanisms to capture long-range interactions beyond local neighborhoods~\citep{sultan2024transformers, maziarka2020molecule}.

\section{Proofs} \label{sec:proof_symmetry}

\subsection{Proof of Proposition \ref{prop:frame_equi}} \label{sec:proof_of_frame_equivariance}
\begin{proof}
    The proof proceeds in three stages: the equivariance of learned vector features, the equivariance of aggregated global axes, and the preservation of equivariance under the Gram-Schmidt orthogonalization and cross-product completion.
    \paragraph{Equivariance of vector features.} The initial features $\mathbf{v}_i^{(0)}$ are constructed from relative coordinates $\tilde{\mathbf{x}}_j - \tilde{\mathbf{x}}_i$. Under a rotation $R$, the centered coordinates transform as $R\tilde{\mathbf{x}}$. Thus, the relative difference becomes $R(\tilde{\mathbf{x}}_j - \tilde{\mathbf{x}}_i)$. Since the initialization function $\psi$ and subsequent message-passing weights $\mathbf{W}_v, \eta$ act only on the invariant channel dimension (scalars), they commute with the spatial rotation matrix $R$. By induction, if $\mathbf{v}_i^{(\ell-1)}(RX) = R\mathbf{v}_i^{(\ell-1)}(X)$, then due to the linearity of the update rule, the refined features satisfy $\mathbf{v}_i^{(\ell)}(RX) = R \mathbf{v}_i^{(\ell)}(X)$ for all $\ell$.

    \paragraph{Equivariance of aggregated axes.} The global axes $\mathbf{u}_1, \mathbf{u}_2$ are obtained via weighted linear combinations of $\mathbf{v}_i$. Since the attention weights are derived from invariant scalars, they remain constant under rotation. Because matrix multiplication is a linear operator, the aggregation preserves equivariance:
    \begin{equation}
        \mathbf{u}_k(RX) = \sum_{i} \alpha_{i}^{(k)} R \mathbf{v}_i(X) = R \sum_{i} \alpha_{i}^{(k)} \mathbf{v}_i(X) = R \mathbf{u}_k(X), \quad \text{for } k \in \{1, 2\}.
    \end{equation}

    \paragraph{Equivariance of the orthonormal basis.}
    We verify the construction of the basis vectors $\mathbf{q}_1, \mathbf{q}_2, \mathbf{q}_3$ step-by-step:

    \begin{itemize}
        \item \textbf{First Axis ($\mathbf{q}_1$):} Normalization is scaling by a scalar length, which is rotation-invariant ($\|R\mathbf{u}\| = \|\mathbf{u}\|$). Thus:
              \begin{equation}
                  \mathbf{q}_1(RX) = \frac{R\mathbf{u}_1}{\|R\mathbf{u}_1\|} = R \frac{\mathbf{u}_1}{\|\mathbf{u}_1\|} = R\mathbf{q}_1(X).
              \end{equation}

        \item \textbf{Second Axis ($\mathbf{q}_2$ via Gram-Schmidt):} The projection step removes the component of $\mathbf{u}_2$ parallel to $\mathbf{q}_1$. Let $\tilde{\mathbf{u}}_2 = \mathbf{u}_2 - (\mathbf{u}_2^\top \mathbf{q}_1)\mathbf{q}_1$. Under rotation $R$:
              \begin{align}
                  \tilde{\mathbf{u}}_2(RX) & = R\mathbf{u}_2 - ((R\mathbf{u}_2)^\top (R\mathbf{q}_1)) (R\mathbf{q}_1)                       \\
                                           & = R\mathbf{u}_2 - (\mathbf{u}_2^\top R^\top R \mathbf{q}_1) (R\mathbf{q}_1)                    \\
                                           & = R\mathbf{u}_2 - (\mathbf{u}_2^\top \mathbf{I} \mathbf{q}_1) R\mathbf{q}_1                    \\
                                           & = R (\mathbf{u}_2 - (\mathbf{u}_2^\top \mathbf{q}_1)\mathbf{q}_1) = R \tilde{\mathbf{u}}_2(X).
              \end{align}
              Since $\tilde{\mathbf{u}}_2$ rotates equivariantly, its normalized version $\mathbf{q}_2$ also satisfies $\mathbf{q}_2(RX) = R\mathbf{q}_2(X)$.

        \item \textbf{Third Axis ($\mathbf{q}_3$ via Cross Product):} A critical property of the cross product for rotation matrices $R \in SO(3)$ (where $\det(R)=1$) is $R(\mathbf{a} \times \mathbf{b}) = (R\mathbf{a}) \times (R\mathbf{b})$. Thus:
              \begin{align}
                  \mathbf{q}_3(RX) & = \mathbf{q}_1(RX) \times \mathbf{q}_2(RX)                \\
                                   & = (R\mathbf{q}_1) \times (R\mathbf{q}_2)                  \\
                                   & = R(\mathbf{q}_1 \times \mathbf{q}_2) = R\mathbf{q}_3(X).
              \end{align}
              Note that this step strictly relies on the right-handed nature of the frame; for reflections, this equality would flip the sign.

    \end{itemize}

    Stacking the columns, we obtain:
    \begin{equation}
        Q(RX) = [R\mathbf{q}_1, R\mathbf{q}_2, R\mathbf{q}_3] = R [\mathbf{q}_1, \mathbf{q}_2, \mathbf{q}_3] = R Q(X).
    \end{equation}
\end{proof}

\paragraph{Discussion.}
The canonical frame construction assumes $\mathbf{u}_1 \neq \mathbf{0}$ and a non-degenerate orthogonalized $\tilde{\mathbf{u}}_2$.
Exact collapse of these vectors is non-generic under continuous parameterization.
Moreover, since the canonical axes are normalized and used end-to-end in the loss, near-collapse leads to ill-conditioned canonicalization: the normalization Jacobian scales as $O(1/\|\mathbf{u}\|)$, which typically amplifies gradients and steers optimization away from such regimes.
In practice, we further ensure numerical stability by applying $\epsilon$-regularized normalization, which guarantees well-defined canonical coordinates even in extreme cases.

\subsection{Proof of Lemma \ref{lemma:se3}} \label{sec:proof_of_invariant_coordinate_mapping}
\begin{proof}
    Let $X' = g \cdot X$ denote the transformed molecular geometry and $\mathbf{x}' = R\mathbf{x} + \mathbf{t}$ denote the transformed query point.
    First, observe that the centroid shifts equivariantly: $\bar{\mathbf{x}}' = \frac{1}{N}\sum (R\mathbf{x}_i + \mathbf{t}) = R\bar{\mathbf{x}} + \mathbf{t}$.
    Second, from Proposition~\ref{prop:frame_equi}, the frame rotates only: $Q(X') = R Q(X)$.
    Substituting these into the query definition:
    \begin{align}
        c(\mathbf{x}'; X') & = Q(X')^\top (\mathbf{x}' - \bar{\mathbf{x}}')                                                 \\
                           & = (R Q(X))^\top \left( (R\mathbf{x} + \mathbf{t}) - (R\bar{\mathbf{x}} + \mathbf{t}) \right)   \\
                           & = Q(X)^\top R^\top \left( R(\mathbf{x} - \bar{\mathbf{x}}) + (\mathbf{t} - \mathbf{t}) \right) \\
                           & = Q(X)^\top R^\top R (\mathbf{x} - \bar{\mathbf{x}})                                           \\
                           & = Q(X)^\top \mathbf{I} (\mathbf{x} - \bar{\mathbf{x}})                                         \\
                           & = c(\mathbf{x}; X).
    \end{align}
    This demonstrates that the canonical coordinate mapping implicitly handles translation via relative differences and rotation via frame projection, ensuring the downstream INR receives consistent inputs regardless of the global pose.
\end{proof}

\paragraph{Discussion.}
This invariance is crucial: the INR weights $\theta$ only learn to recognize features in the canonical frame $\mathbf{x}'$. During inference (e.g., surface reconstruction), we query the field at $\mathbf{x}$ by passing its canonical counterpart $\mathbf{x}'$ into the network. This allows the network output to track the molecule's rotation without the weights themselves being equivariant.

\section{Pseudo-Code
 }
\label{sec:pseudo-code}
\paragraph{End-to-End Training Pipeline.}
Algorithm~\ref{alg:molfield_unified} summarizes the end-to-end training procedure of \method in function space.
Given a training sample $(\mathcal{I}, y)$, the model first constructs a task-dependent condition $z$ from the input.
Conditioned on $z$, a transformer-based hyper-network autoregressively generates a structured sequence of weight tokens, which are subsequently detokenized into the parameters $\theta$ of a C-INR.
This process instantiates a continuous molecular field $f_{\theta}$ in a single forward pass, without per-instance optimization.

To train the generated field, \method follows a generate-then-query paradigm.
A set of spatial query points is sampled in $\mathbb{R}^3$ according to the task and input, and each query point is mapped to a canonical coordinate system defined by the molecular geometry.
The implicit field is evaluated at these canonical coordinates, producing field values as the final prediction.
A task loss is then computed by comparing the prediction with the supervision signal, and gradients are backpropagated through the C-INR queries and the hyper-network, enabling end-to-end optimization of the hyper-network parameters.

Importantly, in this illustrative single-task setting, the pipeline is fully specified by the condition $z$. Across tasks, the same pipeline applies with task-specific conditioning defined in the main text.
This unified formulation allows \method to learn a transferable prior over molecular functions, which can be reused across molecular dynamics, property prediction, and generation.

\begin{algorithm}[!t]
    \caption{\method: End-to-End Training in Function Space}
    \label{alg:molfield_unified}
    \begin{algorithmic}[1]
        \REQUIRE Dataset $\mathcal{D}=\{(\mathcal{I},y)\}$ for a fixed task; condition $z$ associated with $\mathcal{I}$;
        hyper-network $G_{\phi}$; C-INR with $L$ layers; detokenizers $\textsc{Proj}_{(\ell,r)}(\cdot)$;
        equivariant encoder $\Phi(\cdot)$; query sampler $S(z,\mathcal{I})$; loss $\ell(\cdot,\cdot)$
        \ENSURE Trained hyper-network parameters $\phi$

        \STATE Initialize $\phi$
        \WHILE{not converged}
        \STATE Sample mini-batch $\mathcal{B}=\{(\mathcal{I},y)\}\sim\mathcal{D}$
        \FORALL{$(\mathcal{I},y)\in\mathcal{B}$}
        \STATE \textbf{(1) Generate C-INR parameters via hyper-network}
        \FOR{$\ell=1$ \textbf{to} $L$}
        \FORALL{$r\in\{\texttt{weight},\texttt{bias}\}$}
        \STATE $\tau_t \sim p_{\phi}(\tau_t \mid \tau_{<t}, z)$
        \STATE $\theta^{(\ell,r)} \leftarrow \textsc{Proj}_{(\ell,r)}(\tau_t)$
        \ENDFOR
        \ENDFOR
        \STATE Instantiate field network $f_{\theta}$

        \STATE \textbf{(2) Sample one spatial query point}
        \STATE $\mathbf{x} \leftarrow S(z,\mathcal{I})$ \hfill // sample one 3D point (illustration)

        \STATE \textbf{(3) Canonical querying (C-INR)}
        \STATE Extract molecular geometry $(X,A)$ from $\mathcal{I}$
        \STATE $\bar{\mathbf{x}} \leftarrow \frac{1}{N}\sum_{i=1}^{N}\mathbf{x}_i$
        \STATE $\{\mathbf{v}_i\}\leftarrow \Phi(X,A)$
        \STATE Build canonical frame $Q(X)\in SO(3)$ from $\{\mathbf{v}_i\}$
        \STATE $\mathbf{x}' \leftarrow Q(X)^{\top}(\mathbf{x}-\bar{\mathbf{x}})$
        \STATE $\hat{y} \leftarrow f_{\theta}(\mathbf{x}')$ \hfill // queried field value

        \STATE \textbf{(4) End-to-end optimization}
        \STATE $L \leftarrow \ell(\hat{y}, y)$
        \STATE Update $\phi$ by backpropagating $\nabla_{\phi} L$
        \ENDFOR
        \ENDWHILE
        \STATE \textbf{return} $\phi$
    \end{algorithmic}
\end{algorithm}

\section{Equivariant Encoder Details}
\label{sec:encoder}

Our encoder is designed to extract SE(3)-invariant scalar features $H = \{h_i\}_{i=1}^N \in \mathbb{R}^{N \times C}$ and SO(3)-equivariant vector features $V = \{\mathbf{v}_i\}_{i=1}^N \in \mathbb{R}^{N \times C \times 3}$ from the input molecular graph. The architecture follows a dual-stream message passing scheme, ensuring that geometric information propagates efficiently while strictly preserving equivariance.

\subsection{Initialization}The input consists of atomic numbers $Z$ and positions $X$. We perform the following initializations before the first layer:

\paragraph{Scalar initialization.} The scalar features $h_i^{(0)}$ are initialized by embedding the atomic numbers using a learnable lookup table:
\begin{equation}
    h_i^{(0)} = \text{Embedding}(Z_i) \in \mathbb{R}^{C}.
\end{equation}

\paragraph{Edge embedding.} To capture pairwise geometry, we compute the relative distance $d_{ij} = \|\tilde{\mathbf{x}}_j - \tilde{\mathbf{x}}_i\|$ and encode it using a set of $K$ Radial Basis Functions (RBF), followed by a linear projection:
\begin{equation}
    e_{ij} = \text{Linear}(\text{RBF}(d_{ij})) \in \mathbb{R}^{C}.
\end{equation}
This edge embedding $e_{ij}$ is invariant to SE(3) transformations and acts as a geometric bias in message passing.

\paragraph{Vector initialization.} We initialize the vector features $\mathbf{v}_i^{(0)}$ by aggregating local bond directions. This explicitly couples the vector stream with the molecular geometry:
\begin{equation}
    \mathbf{v}_i^{(0)} = \sum_{j \in \mathcal{N}(i)} \psi(h_i^{(0)}, h_j^{(0)}) \cdot \frac{\tilde{\mathbf{x}}_j - \tilde{\mathbf{x}}_i}{|\tilde{\mathbf{x}}_j - \tilde{\mathbf{x}}_i|},
\end{equation}
where $\psi: \mathbb{R}^{2C} \to \mathbb{R}^{C}$ is a shallow MLP that weights the bond directions based on the chemical species of the atom pair.

\subsection{Equivariant Message Passing Layer}Each layer $\ell$ updates both scalar and vector features. The update involves two stages: interaction-aware message passing and channel-wise mixing.

\paragraph{Vector update (equivariant stream).} We update the vector features by aggregating information from neighbors. Crucially, we preserve rotation equivariance by restricting operations to linear combinations of existing vectors. The update rule is:
\begin{equation}
    \mathbf{v}_i^{(\ell)} = \mathbf{W}_{v} \mathbf{v}_i^{(\ell-1)} + \sum_{j \in \mathcal{N}(i)} \eta_{ij}^{(\ell)} \cdot \mathbf{v}_{j}^{(\ell-1)},
\end{equation}
where $\mathbf{W}_{v} \in \mathbb{R}^{C \times C}$ is a learnable channel-mixing matrix. The coefficients $\eta_{ij}^{(\ell)}$ are scalar gating values derived from the invariant stream, allowing scalar information to modulate vector propagation:
\begin{equation}
    \eta_{ij}^{(\ell)} = \text{MLP}_{gate}\left( h_i^{(\ell-1)}, h_j^{(\ell-1)}, e_{ij} \right) \in \mathbb{R}^{C}.
\end{equation}
This mechanism ensures that the vector aggregation is sensitive to both chemical context and scalar distances.

\paragraph{Scalar update (invariant stream).} The scalar features are updated by aggregating messages from neighbors and, importantly, fusing information from the vector stream. This allows angular and directional information (captured by vectors) to influence the scalar chemical properties:
\begin{equation}
    m_{ij}^{(\ell)} = \phi_{msg}\left( h_i^{(\ell-1)}, h_j^{(\ell-1)}, e_{ij} \right),
\end{equation}
\begin{equation}
    h_i^{(\ell)} = h_i^{(\ell-1)} + \text{MLP}_{upd}\left( \sum_{j \in \mathcal{N}(i)}m_{ij}^{(\ell)}\oplus \|\mathbf{v}_i^{(\ell)}\| \right).
\end{equation}
Here, $\phi_{msg}$ is the message function, and $\|\mathbf{v}_i^{(\ell)}\| \in \mathbb{R}^{C}$ denotes the channel-wise norm of the updated vector features. By concatenating the vector norms, we allow the scalar features to sense the anisotropy of the local environment without breaking rotation invariance (since the norm of a vector is rotation-invariant).

\section{Experimental Details}

\subsection{Molecular Dynamics}

Molecular dynamics simulations provide a physically grounded and high-resolution description of protein conformational dynamics, and are therefore widely used as benchmarks for evaluating data-driven models of biomolecular motion. In this work, we follow \citet{sun2023dsr} to construct a comprehensive benchmark suite by combining multiple publicly available MD datasets, covering diverse protein sizes, structural classes, and dynamical regimes. Specifically, we include long-timescale trajectories of intrinsically disordered proteins, soluble enzymes undergoing equilibrium fluctuations and conformational transitions, as well as large membrane proteins with complex collective motions. To further assess generalization across related systems, we also incorporate multiple trajectories from the same protein families simulated under similar protocols. For computational efficiency and consistency, extremely long trajectories are temporally cropped to a fixed maximum length, while shorter trajectories are used in full. This combination of datasets enables a systematic evaluation of model performance in terms of surface reconstruction accuracy, temporal interpolation, and future-state prediction under realistic and heterogeneous protein dynamics.
We follow \citet{sun2023dsr} for the experimental setup.

\subsection{Molecular Property Prediction}

In addition to protein dynamics benchmarks, we evaluate our model on the QM9 dataset for molecular property prediction \citep{ramakrishnan2014quantum}. QM9 is a widely used quantum chemistry benchmark consisting of approximately 134k small organic molecules with up to nine heavy atoms, where molecular geometries and properties are computed using density functional theory. Following standard practice, we consider five representative target properties that capture both electronic and structural characteristics of molecules: the highest occupied molecular orbital energy ($\varepsilon_{\text{HOMO}}$), dipole moment ($\mu$), electronic spatial extent ($R^2$), isotropic polarizability ($\alpha$), and zero-point vibrational energy (ZPVE). These properties span different physical scales and units, providing a comprehensive assessment of a model’s ability to learn quantum-mechanical observables from molecular structures. Performance is reported using mean absolute error for each target, and an average ranking across all properties is additionally used to summarize overall predictive accuracy.
We follow the \citet{anderson2019cormorant} for the experimental setup.

\subsection{Molecular Generation}

We further evaluate our model on unconditional three-dimensional molecular generation tasks using the QM9 \citep{ramakrishnan2014quantum} and Drugs \citep{axelrod2022geom} datasets. QM9 contains small organic molecules with up to nine heavy atoms and well-optimized equilibrium geometries, serving as a standard benchmark for learning physically valid 3D molecular distributions. The Drugs dataset consists of substantially larger and more diverse drug-like compounds, providing a more challenging setting with increased structural complexity and conformational variability.
We adopt the standard unconditional generation protocol, where models are trained to learn the joint distribution of molecular topology and atomic coordinates from the training set without conditioning on molecular properties or target structures. During training, molecules are represented by their atomic types and 3D coordinates, and the model is optimized to maximize the likelihood (or equivalently minimize the training objective defined by the generative framework) over the observed molecular structures. At inference time, molecules are generated from random latent samples, and full molecular geometries are produced in a single forward sampling process.
We follow \citet{hoogeboom2022equivariant,kirchmeyer2024score} for the experimental setup.

\subsection{Implementation Details of \method}

\method employs a Transformer-based hyper-network with 6 encoder layers, 8 attention heads, hidden dimension 2048, and dropout rate 0.1 generates weights for the C-INR, which is implemented as an 8-layer MLP with hidden dimensions [512, 512, 512, 512, 512, 512, 512, 512] and skip connections at layer 4.
For the equivariant encoder, \method uses an equivariant Graph Neural Network with 6 message-passing layers, hidden dimension 512, and edge features encoded using radial basis functions with 32 centers.
For the molecular dynamics task, the time coordinates are encoded using 64-dimensional Fourier features projected to 512 dimensions.
We train for 2000 epochs using Adam optimizer with initial learning rates $10^{-4}$ for the network and $10^{-3}$ for latent codes, both decayed by factor 0.5 every 200 epochs, with batch size 1 and 16 data loading workers. All experiments are conducted on NVIDIA 4 $\times$ A40 GPUs using PyTorch 1.12.0 and PyTorch Geometric.

\section{Additional Experiments
 }

\subsection{Molecule Generation}
\label{sec:generation}





In this section, we present supplementary results on molecular generation to illustrate that the proposed function-space formulation admits a coherent generative interpretation.
We stress that molecular generation is \emph{not} a primary evaluation target of \method, nor is the framework optimized specifically for this task.
Instead, these experiments serve to demonstrate the extensibility of learning molecular representations directly in implicit functional space, beyond the discriminative and predictive settings studied in the main body.

\begin{table}[!t]
    \centering
    \caption{
        \textbf{Evaluation on QM9 molecular generation.}
        We report validity and uniqueness (Valid \& Uni), atom-level stability (AtomSta), and molecule-level stability (MolSta).
        Our method achieves competitive performance across all metrics, while maintaining strong molecular stability compared to existing approaches and the current state-of-the-art \citep{you2024latent}.
    }
    \label{tab:generation_qm9}
    \begin{tabular}{l|ccc}
        \toprule
        Method        & Valid \& Uni (\%) & AtomSta (\%)  & MolSta (\%)   \\
        \midrule
        ENF           & 39.4              & 85.0          & 4.9           \\
        G-SchNet      & 80.3              & 95.7          & 68.1          \\
        GDM           & --                & 97.0          & 63.2          \\
        EDM           & 90.7              & \textbf{98.7} & 82.6          \\ \midrule
        \textbf{Ours} & \textbf{91.3}     & 97.2          & \textbf{83.7} \\ \midrule
        Current SOTA  & 95.3              & 97.6          & 87.7          \\
        \bottomrule
    \end{tabular}
\end{table}

\paragraph{Experimental setting.}
We follow the experimental protocols of \citet{kirchmeyer2024score} and evaluate molecular generation on QM9~\citep{ramakrishnan2014quantum} and Drugs~\citep{axelrod2022geom}.
Generation in \method is performed by sampling latent conditions from a Gaussian prior and instantiating continuous molecular fields through the learned hyper-network.
Discrete molecular graphs are then recovered via standard post-processing procedures applied to the generated fields.
We report commonly used generation metrics, including validity and uniqueness (Valid \& Uni), atom-level stability (AtomSta), and molecule-level stability (MolSta).
Comparisons are made against representative graph-based and continuous-space generative models, including ENF~\citep{garcia2021n}, G-SchNet~\citep{gebauer2019symmetry}, GDM~\citep{hoogeboom2022equivariant}, EDM~\citep{hoogeboom2022equivariant}, and FuncMol~\citep{kirchmeyer2024score}, using their standard reported configurations.

\paragraph{Generation results.}
Table~\ref{tab:generation_qm9} reports quantitative results on molecular generation.
While \method is not designed as a specialized molecular generative model, it achieves competitive performance across multiple metrics, including validity, uniqueness, and molecular stability.
This indicates that the learned function-space prior captures non-trivial molecular structure and produces coherent samples rather than degenerate or noisy outputs.
Importantly, these results suggest that a representation primarily learned for molecular dynamics modeling and property prediction can nevertheless support meaningful generative sampling, without introducing task-specific generation modules.
Overall, the results provide evidence that the proposed functional parameterization is expressive enough to admit a viable generative interpretation.

\definecolor{drop-1}{RGB}{225,230,245}
\definecolor{drop-2}{RGB}{245,235,235}

\begin{table}[!t]
    \centering
    \caption{
        \textbf{Transfer stability on Drugs.}
        Atom-level and molecule-level performance under scratch and transfer settings, together with the performance drop (Drop $\downarrow$), where lower is better.
        \method shows substantially smaller degradation compared to prior methods, demonstrating improved robustness to domain shift.
    }
    \label{tab:stability_transfer_transposed}
    \begin{tabular}{l | ccc | ccc}
        \toprule
        \multirow{2}{*}{Method}
         & \multicolumn{3}{c|}{AtomSta}
         & \multicolumn{3}{c}{MolSta}                                               \\
        \cmidrule(lr){2-4} \cmidrule(lr){5-7}
         & Scratch                      & Transfer & Drop $\downarrow$
         & Scratch                      & Transfer & Drop $\downarrow$              \\
        \midrule
        EDM
         & 97.8                         & 91.5     & \cellcolor{drop-2}\textbf{6.3}
         & 40.3                         & 36.7     & \cellcolor{drop-2}\textbf{3.6} \\

        FuncMol
         & 95.3                         & 87.6     & \cellcolor{drop-2}\textbf{7.7}
         & 69.7                         & 60.1     & \cellcolor{drop-2}\textbf{9.6} \\
        \midrule

        \method
         & 90.1                         & 88.4     & \cellcolor{drop-1}\textbf{1.7}
         & 56.3                         & 54.7     & \cellcolor{drop-1}\textbf{1.6} \\
        \bottomrule
    \end{tabular}
\end{table}

\paragraph{Transferability under distribution shift.}
We further consider a transfer generation setting, where the model is pretrained on QM9 and finetuned on the Drugs dataset.
As shown in Table~\ref{tab:stability_transfer_transposed}, \method exhibits smaller degradation in atom-level and molecule-level stability under domain shift compared to prior approaches.
This behavior suggests that the learned function-space representation captures structural regularities that are transferable across molecular domains, rather than being tightly coupled to dataset-specific graph statistics.
Such transferability is consistent with the view of molecules as continuous functional objects and further supports the generality of the proposed formulation.



\begin{figure*}[htbp]
    \centering
    \subfigure{\includegraphics[width=0.19\linewidth]{fig/results/property_scatter_qm9_scratch_eps_HOMO_0.pdf}}
    \subfigure{\includegraphics[width=0.19\linewidth]{fig/results/property_scatter_qm9_scratch_mu_1.pdf}}
    \subfigure{\includegraphics[width=0.19\linewidth]{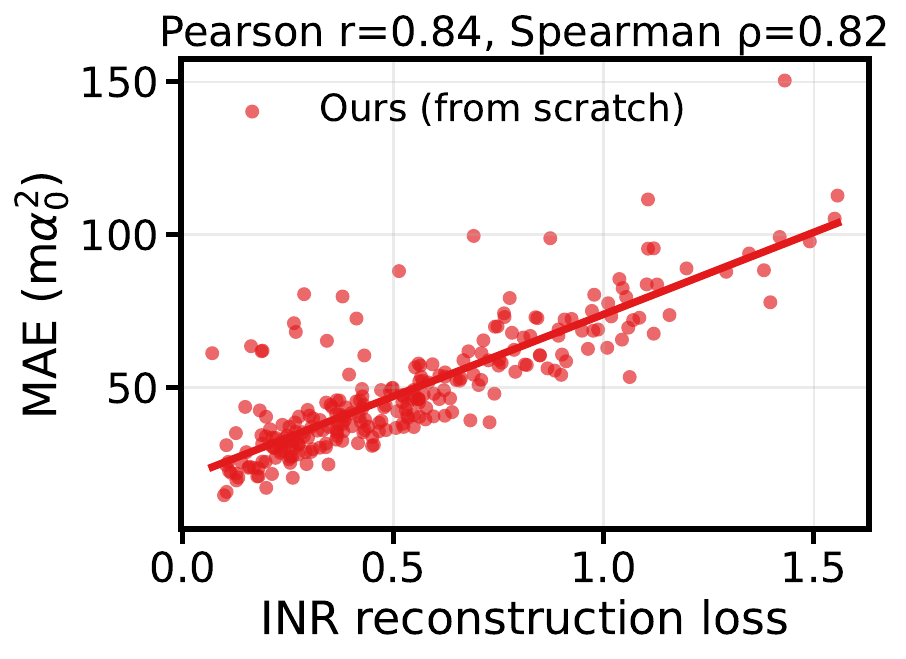}}
    \subfigure{\includegraphics[width=0.19\linewidth]{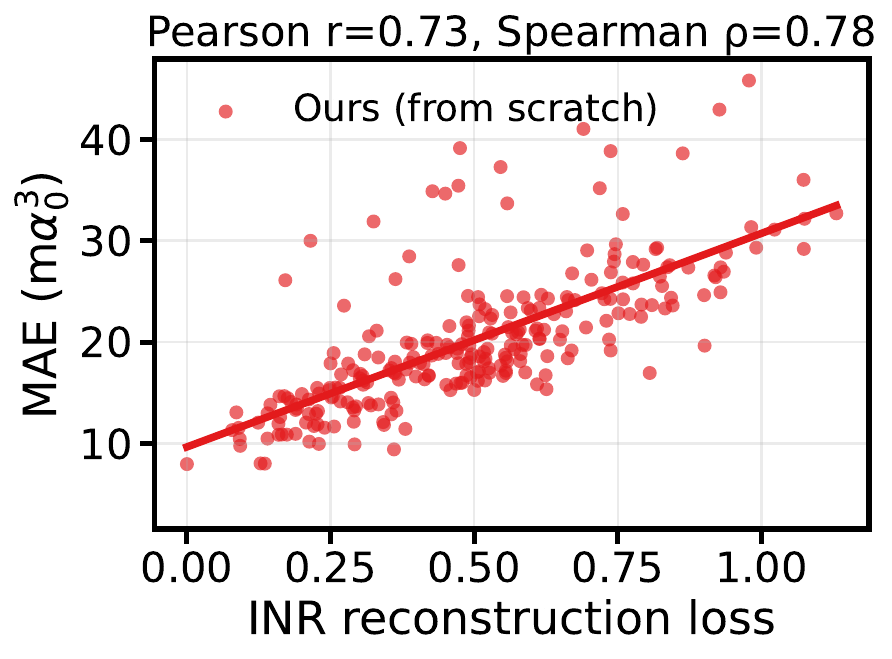}}
    \subfigure{\includegraphics[width=0.19\linewidth]{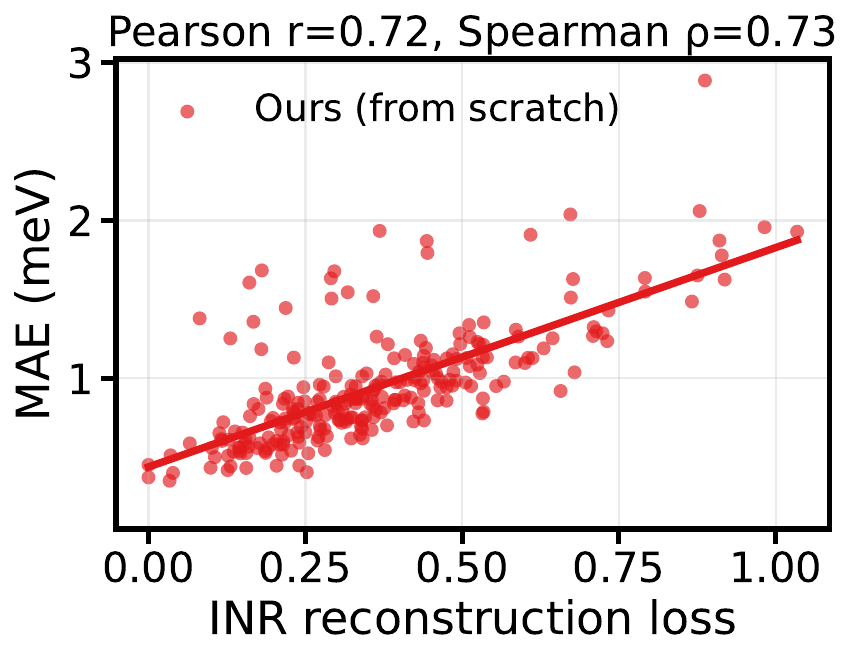}}

    \subfigure{\includegraphics[width=0.19\linewidth]{fig/results/property_scatter_qm9_pretrain_eps_HOMO_0.pdf}}
    \subfigure{\includegraphics[width=0.19\linewidth]{fig/results/property_scatter_qm9_pretrain_mu_1.pdf}}
    \subfigure{\includegraphics[width=0.19\linewidth]{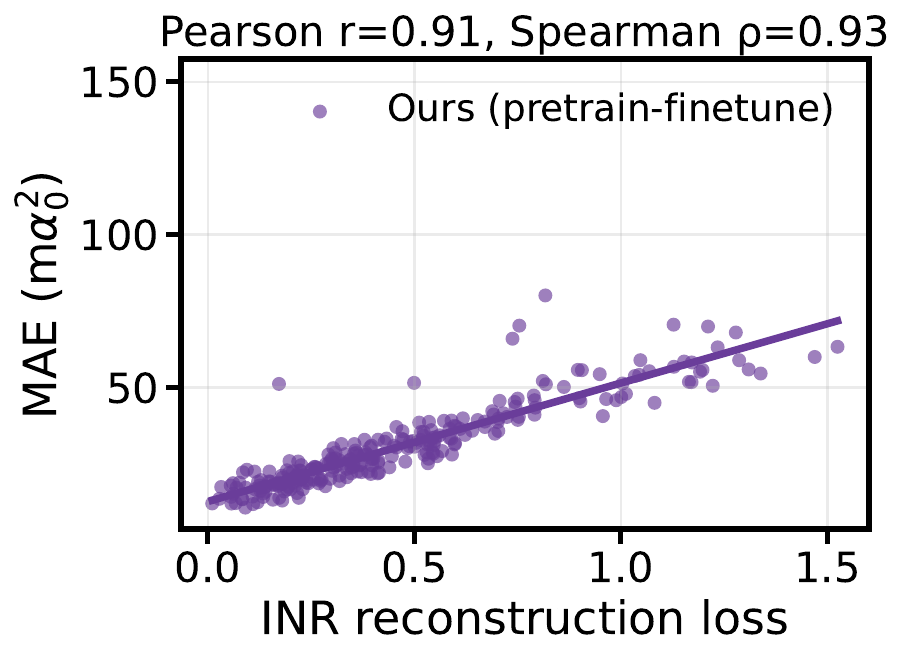}}
    \subfigure{\includegraphics[width=0.19\linewidth]{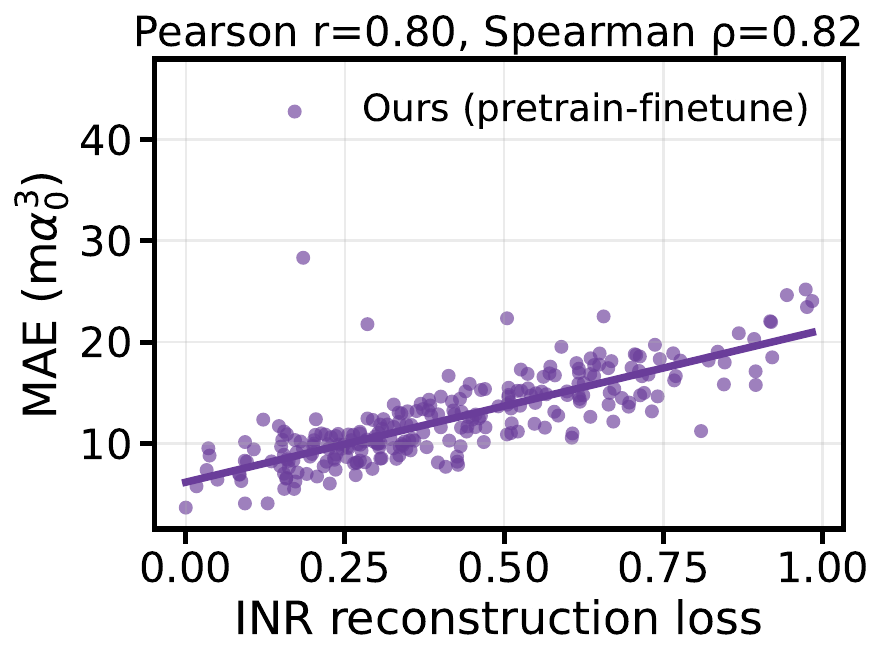}}
    \subfigure{\includegraphics[width=0.19\linewidth]{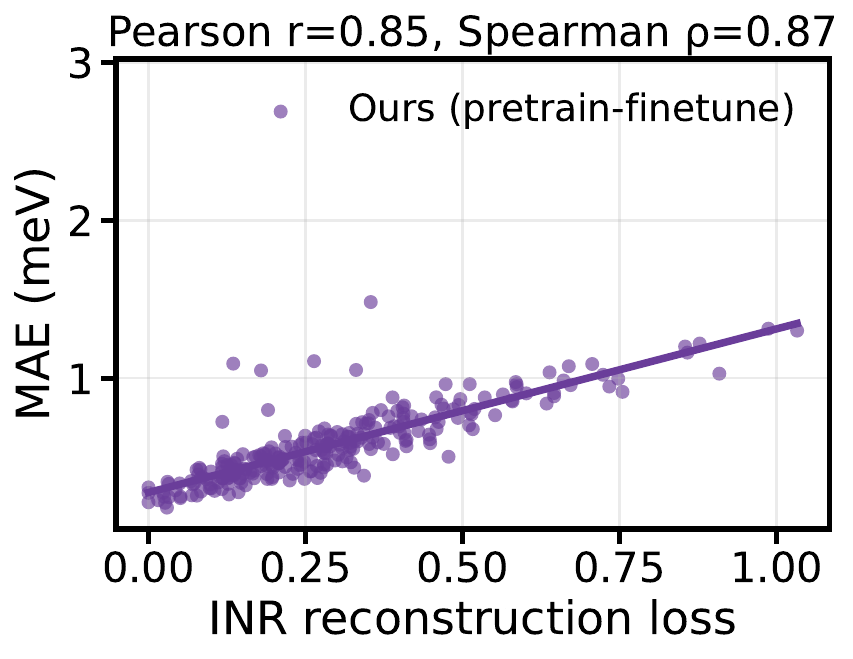}}

    \caption{
        \textbf{Additional correlation results between INR reconstruction loss and property prediction error (MAE) for MolField.}
        Each column corresponds to a molecular property, with the top row showing models trained from scratch and the bottom row showing models with pretraining followed by finetuning.
        Each point denotes one molecule.
        Pearson and Spearman correlation coefficients are reported in each subfigure.
        Consistent positive correlations across properties indicate that more accurate function-space reconstructions are associated with improved property prediction, and that pretraining further reduces both reconstruction loss and prediction error.
    }
    \label{fig:pp_dist_full}
\end{figure*}

\subsection{Full Distribution Analysis}\label{sec:full_distribution}

Beyond aggregate metrics, we analyze the relationship between function-space reconstruction quality and downstream prediction performance at the level of the full data distribution.
Figure~\ref{fig:pp_dist_full} visualizes the correlation between the INR reconstruction loss and property prediction error (MAE) for multiple molecular properties, under both training-from-scratch and pretrain-finetune settings.
Across all properties, we observe strong positive correlations between reconstruction loss and prediction error, as quantified by both Pearson and Spearman coefficients. These indicate that improvements in function reconstruction accuracy translate continuously into lower prediction error.

Pretraining the INR further strengthens this relationship.
As shown in Figure~\ref{fig:pp_dist_full} (bottom row), the pretrain-finetune regime shifts the joint distribution toward lower reconstruction loss and lower prediction error, while preserving the overall monotonic trend.
This behavior indicates that pretraining improves the quality of the learned function family rather than altering the mechanism by which downstream tasks are solved..

More broadly, this distribution-level dependence suggests that downstream prediction difficulty is governed by the intrinsic complexity of the underlying molecular function.
As the implicit representation more faithfully captures the continuous structure of a molecule, multiple physical properties become simultaneously easier to predict.
These results provide empirical evidence that \method learns structural regularities intrinsic to molecular systems, rather than relying on task-specific shortcuts or property-dependent heuristics.

\begin{figure}[!b]
    \centering
    \subfigure{\includegraphics[width=0.33\linewidth]{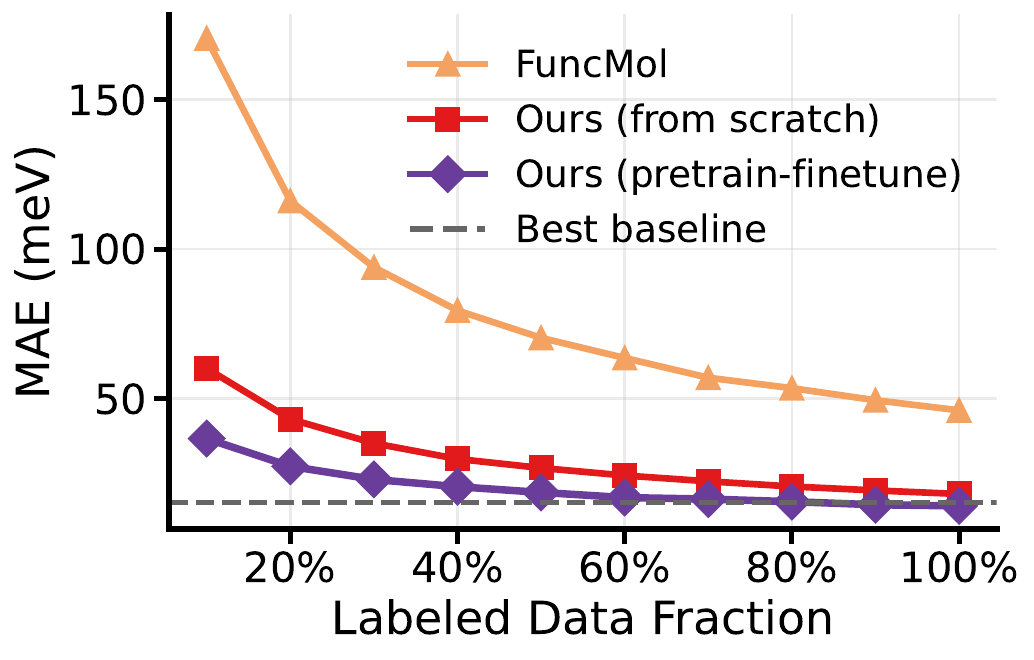}}
    \subfigure{\includegraphics[width=0.33\linewidth]{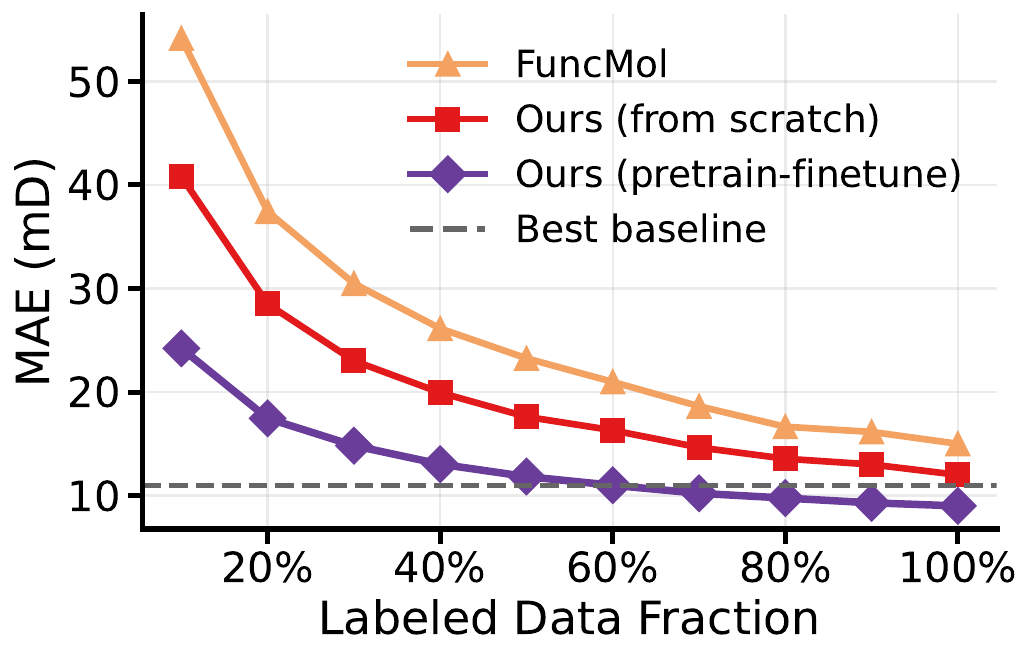}}
    \subfigure{\includegraphics[width=0.33\linewidth]{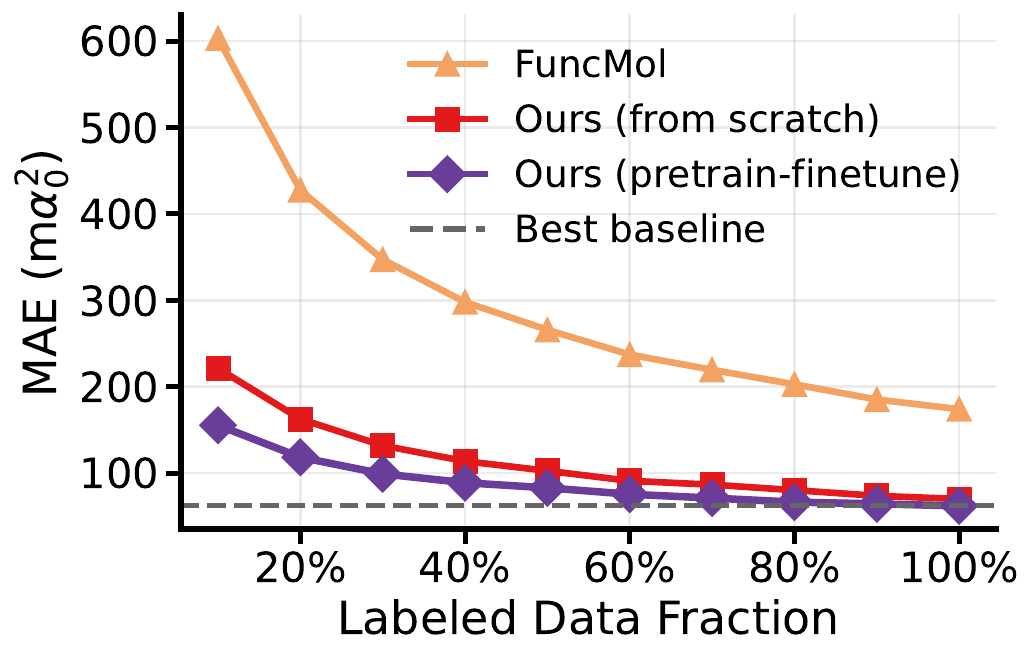}} \\
    \subfigure{\includegraphics[width=0.33\linewidth]{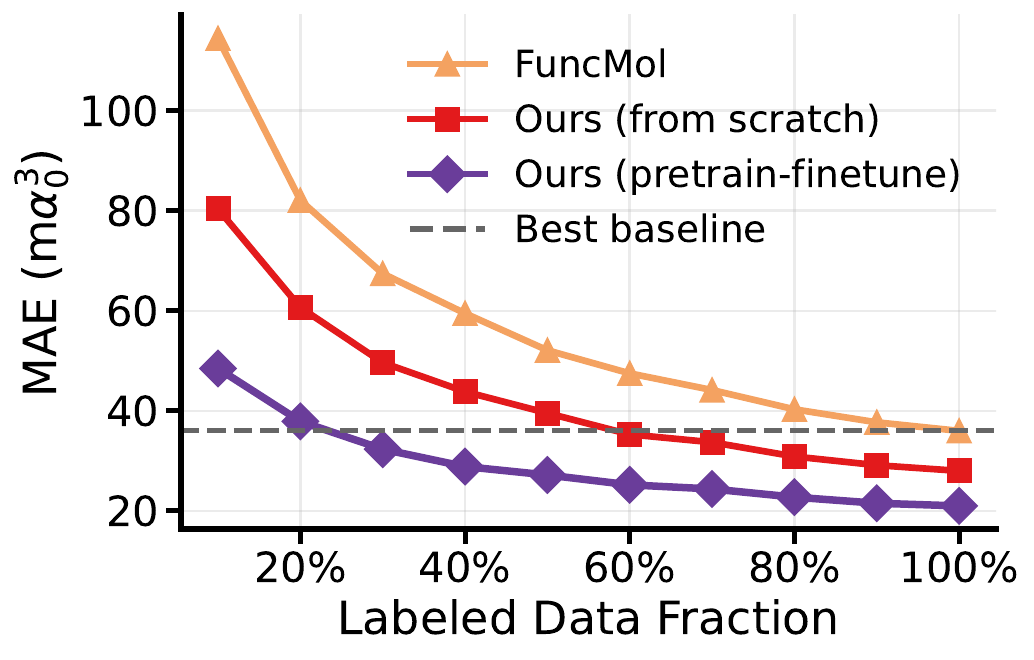}}
    \subfigure{\includegraphics[width=0.33\linewidth]{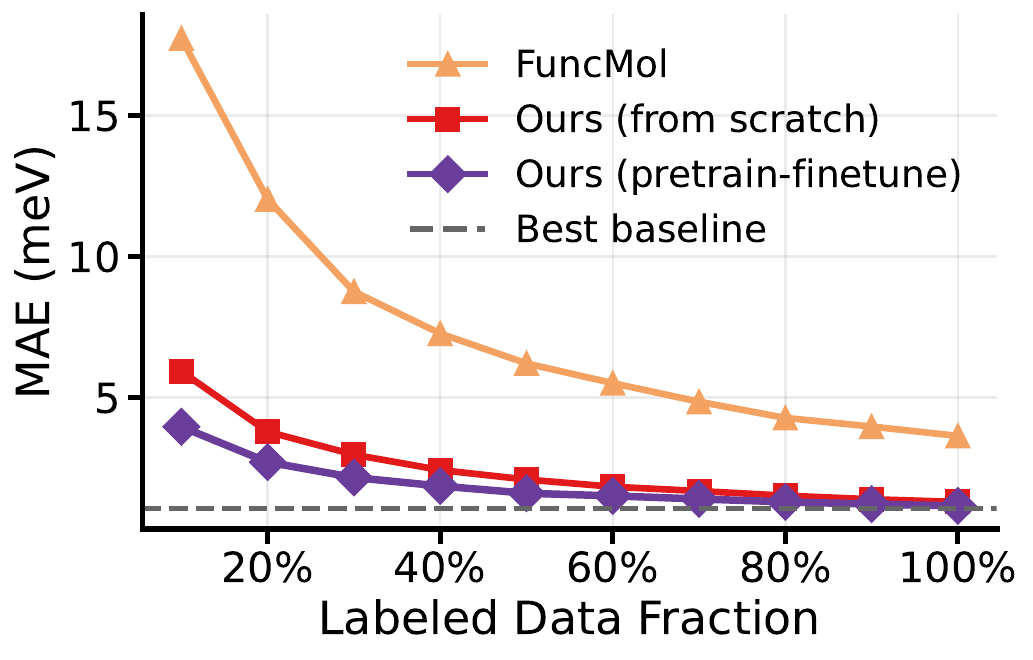}}

    \caption{\textbf{Property prediction results with different labeled data ratio.}
        \method consistently outperforms the primary baseline FuncMol across all label ratios and properties. Pretraining followed by finetuning further improves sample efficiency, especially in low-label regimes. The dashed line indicates the best-performing baseline with full data among prior methods for each property.}
    \label{fig:low_data_pp}
\end{figure}
\subsection{Low data analysis}\label{sec:low_data_app}
We evaluate property prediction under varying labeled data fractions to assess data efficiency and robustness.
As shown in Figure~\ref{fig:low_data_pp}, \method consistently achieves lower MAE than the primary baseline FuncMol across all labeled-data ratios and all evaluated molecular properties.
This difference can be attributed to the distinct inductive biases of the two approaches.
While FuncMol is designed to model instance-specific molecular configurations and excels at generation, molecular property prediction primarily depends on capturing structural regularities that are shared across molecules.
Rather than requiring exact reconstruction of individual atomic configurations, property prediction benefits from representations that emphasize common geometric and physical patterns, such as chemical bonding structures and relational atomic arrangements.
By learning molecular representations in function space via a hyper-network, \method prioritizes these shared structural characteristics, leading to consistently improved prediction accuracy.

The dashed lines in Figure~\ref{fig:low_data_pp} indicate the best-performing prior baseline for each property.
Compared against this reference, \method demonstrates competitive or superior data efficiency on several properties.
For example, on the property polarizability ($m\alpha_0^3$), \method achieves performance comparable to the best baseline using only $20\%$ of the labeled data, highlighting its strong sample efficiency in low-label regimes.

Figure~\ref{fig:low_data_pp} further compares two training strategies of \method: training from scratch and pretrain-finetune.
Across all label ratios, pretrain-finetune consistently outperforms training from scratch, with the largest gains appearing in the low-label regime.
Since pretraining is performed without access to target property labels, these improvements suggest that the model learns objective and transferable regularities of molecular structure in function space, which can be effectively reused to reduce the amount of labeled supervision required for downstream prediction.

Taken together, these results indicate that \method learns more than a task-specific predictor.
By capturing structural patterns intrinsic to molecular systems, it yields improved predictive performance and stronger sample efficiency, particularly in low-label regimes.

\end{document}